\documentclass[11pt]{article}

\usepackage{geometry}
\geometry{margin=1.1in}

\usepackage{natbib}

\usepackage{pgfplots}

\usepackage{microtype}
\usepackage{graphicx}
\usepackage{booktabs}
\usepackage{array}

\usepackage{hyperref}




\usepackage{color}
\usepackage{booktabs}       
\usepackage{amsfonts}       
\usepackage{nicefrac}       
\usepackage{microtype}      
\usepackage{lipsum}
\usepackage{bbm}
\usepackage{dsfont}

\usepackage{pgffor}
\usepackage{bbm}
\usepackage{graphicx}
\usepackage{subcaption}
\usepackage{multirow}
\usepackage[normalem]{ulem}

\usepackage{amsmath}




\newcommand{\comm}[1]{\textcolor{cyan}{}}

\usepackage{amsthm}

\newtheorem{definition}{Definition}
\newtheorem{proposition}{Proposition}
\newtheorem{theorem}{Theorem}
\newtheorem{remark}{Remark}
\newtheorem{lemma}{Lemma}

\theoremstyle{definition}
\newtheorem{example}{Example}

\newcommand{\alice}[0]{\textit{Alice}}
\newcommand{\bob}[0]{\textit{Bob}}
\newcommand{\eve}[0]{\textit{Eve}}
\newcommand{\sm}[0]{\textit{sm}}
\newcommand{\fr}[0]{\textit{fr}}

\newcommand{\wfomc}[0]{\operatorname{WFOMC}}

\title{Weighted First-Order Model Counting in the Two-Variable Fragment With Counting Quantifiers}

\author{%
   Ond\v{r}ej Ku\v{z}elka \\
   Faculty of Electrical Engineering \\
   Czech Technical University in Prague \\
   Prague, Czech Republic \\
}
\date{}

\begin{document}

\maketitle

\begin{abstract}
It is known due to the work of Van den Broeck et al [KR, 2014] that weighted first-order model counting (WFOMC) in the two-variable fragment of first-order logic can be solved in time polynomial in the number of domain elements. In this paper we extend this result to the two-variable fragment with counting quantifiers.
\end{abstract}

\section{Introduction}

In this paper we study {\em weighted first-order model counting} (WFOMC), which is an important problem (not only) because it can be used for probabilistic inference in most statistical relational learning models \citep{van2011lifted,getoor2007introduction}. Probabilistic inference is in general intractable and the same holds for probabilistic inference in relational domains and therefore also for WFOMC. {\em Lifted inference} refers to a set of methods developed in the probabilistic inference literature which exploit structure and symmetries of the problems for making inference more tractable, e.g.\ \citep{poole2003first,DBLP:conf/ijcai/BrazAR05,DBLP:conf/uai/GogateD11a,broeck2011completeness,van2014skolemization,DBLP:conf/nips/KazemiKBP16}. One of the most celebrated results on symmetric WFOMC comes from the works \citep{broeck2011completeness,van2014skolemization} which established that WFOMC can be solved in polynomial time for any fixed first-order logic sentence which contains at most two variables. In the lifted inference literature, problems which admit such polynomial-time algorithms are called {\em domain-liftable} \citep{broeck2011completeness}.\footnote{Like, among others, the works \citep{broeck2011completeness,van2014skolemization,kazemi2014relational,DBLP:conf/lics/KuusistoL18}, we also consider only the {\em symmetric} version of weighted first-order model counting. For details and differences with the asymmetric version, we refer to the paper \citep{beame2015symmetric}.}

\citet{DBLP:conf/lics/KuusistoL18} recently extended the domain-liftability result for the two-variable fragment by allowing to express one {\em functionality} constraint. That is, one can specify that some binary relation should behave as a function. In their paper, they also mentioned (although without giving any details) that if they could further extend their result to multiple funcionality constraints, they could also establish domain liftability of the two-variable fragment of first-order logic with counting quantifiers $\exists_{=k}$, $\exists_{\leq k}$ and $\exists_{\geq k}$ (see, e.g., \citep{gradel1997two}), which stand for {\em exist exactly k}, {\em exists at most k} and {\em exist at least k}, respectively. Motivated by the work of \citeauthor{DBLP:conf/lics/KuusistoL18}, in this paper, we first show a simpler method to add an arbitrary number of {\em function} constraints and {\em cardinality} constraints to sentences from the two-variable fragment while still guaranteeing polynomial-time inference. We then use this result to prove that WFOMC is domain-liftable for sentences from the two-variable fragment of first-order logic with counting quantifiers.

This paper is an extension of our preliminary report \citep{kuzelka2020lifted} in which we showed domain-liftability of WFOMC with functionality constraints in the context of Markov logic networks. 
The rest of the paper is structured as follows. Section \ref{sec:background} contains the background material needed for our technical results. In Sections \ref{sec:modelcountingfunction}-\ref{sec:exists-k-quantifier-section}, we work towards the proof of our main result which we give in Section \ref{sec:main-result}. We discuss related work in Section \ref{sec:related-work} and conclude the paper in Section \ref{sec:conclusions}. Appendix located at the end of the paper then contains omitted proofs and additional examples as well as some additional technical material that is not needed for the main result but which further illustrates some of our techniques.

\section{Background}\label{sec:background}



\subsection{Lagrange Interpolation}\label{sec:lagrange}

Lagrange interpolation (see, e.g., \cite{seroul2000programming}) is a classical method for finding the {\em unique} polynomial $p(x)$ of degree $d$ that, for given $d+1$ points $(x_0,y_0), (x_1,y_1)$, $\dots$, $(x_{d},y_{d})$ satisfies $p(x_0) = y_0$, $p(x_1) = y_1$, $\dots$, $p(x_{d}) = y_{d}$ (under the condition that $x_i \neq x_j$ for all $i \neq j$). There are various methods of finding the coefficients of the polynomial (e.g.\ based on special algorithms for solving systems of linear equations with Vandermonde matrices), however, in this paper it will be enough to consider the elementary method based on the explicit Lagrange formula:

$$L(x) = \sum_{i = 0}^d y_i \cdot l_i(x) $$

\noindent where $l_i$ is defined as

$$l_i(x) =  \prod_{\scriptsize{\begin{array}{c} 0 \leq j \leq d \\ i \neq j \end{array}}} \frac{x - x_j}{x_i - x_j}.$$

The next proposition shows a useful property of ``bit complexity'' of the coefficients of such interpolating polynomials that will be useful later in this paper (the proof of this proposition is located in the appendix).

\begin{proposition}\label{prop:lagrange}
Let $L(x) = \sum_{j=0}^{d} a_j \cdot x^j$ be the interpolating polynomial (written in the standard form as a sum of monomials) of points $(x_0,y_0)$, $(x_1,y_1)$, $\dots$, $(x_{d},y_{d})$, where all $x_j$'s are integers and all $y_j$'s are rational numbers, represented as fractions of integers. Every $a_j$ can be represented as a fraction $a_j = \frac{b_j}{c_j}$ and the number of bits needed to represent the integers $b_j$ and $c_j$ is polynomial in $d$ and in the number of bits needed to represent the points $(x_0,y_0)$, $(x_1,y_1)$, $\dots$, $(x_{d},y_{d})$.
\end{proposition}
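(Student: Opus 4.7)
The plan is to write $L(x)$ in the explicit Lagrange form and then bound, step by step, the bit complexity of every intermediate quantity, verifying that no step can cause super-polynomial blowup. Write $B$ for the total number of bits needed to represent the input points, so that $|x_j| \leq 2^B$ for all $j$ and each $y_i$ can be written as $p_i/q_i$ with $|p_i|, |q_i| \leq 2^B$.

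First, I would expand each numerator polynomial $\prod_{j \neq i}(x - x_j)$ of $l_i(x)$ into standard monomial form. Its coefficients are, up to sign, the elementary symmetric polynomials in the $d$ integers $\{x_j : j \neq i\}$, and each such elementary symmetric polynomial is bounded in absolute value by $\binom{d}{k}\cdot (2^B)^d \leq 2^{d(B+1)}$, giving bit length $O(dB)$. The scalar denominator $D_i := \prod_{j \neq i}(x_i - x_j)$ is a single integer of absolute value at most $(2 \cdot 2^B)^d$, hence also of bit length $O(dB)$. Multiplying by $y_i = p_i/q_i$, every coefficient of $y_i \, l_i(x)$ can therefore be written as a single rational $N_{i,k}/(q_i D_i)$ whose numerator and denominator are integers of bit length $O(dB)$.

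Next, I would sum these $d+1$ polynomials to obtain $L(x)$. Putting all the coefficients of $x^k$ over the common denominator $C := \prod_{i=0}^d (q_i D_i)$, the denominator $C$ has bit length $O(d \cdot dB) = O(d^2 B)$, and each of the $d+1$ terms contributing to the numerator of the coefficient of $x^k$ is a product of one original numerator of bit length $O(dB)$ with the $d$ other denominators, also of total bit length $O(d^2 B)$. Summing these $d+1$ integers preserves the bound up to an additive $\log(d+1)$. Hence each $a_j$ is represented as $b_j/c_j$ with $b_j$ and $c_j$ of bit length polynomial in $d$ and $B$, which is what we wanted.

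The only point that needs genuine care is this last common-denominator step: a priori, combining many rationals could multiply bit-complexities in a way that is exponential in $d$. The key observation is that we only multiply $d+1$ integers whose individual bit lengths are already $O(dB)$, so the resulting bit length is $O(d^2 B)$ rather than $2^{\Theta(d)}$. Everything else in the argument is just careful bookkeeping of sums, products, and binomial-coefficient bounds applied to elementary symmetric polynomials.
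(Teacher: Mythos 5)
Your proposal is correct and follows essentially the same route as the paper's proof: bound the coefficients of the expanded numerator $\prod_{j\neq i}(x-x_j)$ (your elementary-symmetric-polynomial bound $\binom{d}{k}(2^B)^d$ is the paper's $2^d\max_j|x_j|^d$), bound the scalar denominator $\prod_{j\neq i}(x_i-x_j)$ the same way, and then combine the $d+1$ weighted terms over a common denominator, observing that the product of $d+1$ integers of bit length $O(dB)$ has bit length $O(d^2B)$. No gaps; the bookkeeping matches the paper's.
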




\subsection{First-Order Logic}

We assume that the reader is familiar with first-order logic and we only cover it briefly in this section to set up notation used throughout the paper. 

We work with function-free first-order logic languages $\mathcal{L}$, defined by a set of constants, called domain and usually denoted as $\Delta$, a set of variables $\mathcal{V}$ and a set of predicates $\mathcal{R}$ (relations). When there is no risk of confusion, we assume such a language implicitly and do not specify its components $\mathcal{V}$, $\mathcal{R}$ (although we will usually specify the domain). We use $\textit{arity}(R)$ to denote the arity of a predicate~$R$. 
An expression of the form $r(a_1,...,a_k)$, with $a_1,...,a_k\in \Delta \cup \mathcal{V}$ and $r\in \mathcal{R}$, is called an {\em atom} or {\em atomic formula}. For example, $\sm(x)$, $\sm(\alice)$ and $\fr(\alice,y)$ are atoms. A variable which is not bound by any quantifier is called {\em free}. A first-order logic formula with no free variables is called a sentence. For instance, the formula $\forall x : \neg f(x,x)$ is a sentence, whereas the formula $f(x,x)$ is not a sentence as the variable $x$ is free in it. 
A first-order logic formula in which none of the atoms contains any variables is called {\em ground}. A possible world $\omega$ is represented as a set of ground atoms that are true in $\omega$. The satisfaction relation $\models$ is defined in the usual way: $\omega \models \alpha$ means that the formula $\alpha$ is true in $\omega$. For instance, if $\omega = \{ \sm(\bob) \}$ is a possible world on the domain $\Delta = \{ \alice, \bob \}$ then it holds $\omega \models (\exists x : \sm(x))$ and $\omega \not\models (\forall x : \sm(x))$.

The two-variable fragment of first-order logic (FO$^2$) is obtained by restricting the set of variables to $\mathcal{V} = \{x,y \}$. For example, the sentence $\forall x \forall y : a(x) \wedge e(x,y) \Rightarrow a(y)$ is in FO$^2$. The fragment of first-order logic FO$^2$ is interesting among others because (i) satisfiability is decidable for it (in particular it is NEXPTIME-complete) and (ii) weighted first-order model counting is polynomial-time (in the size of the domain) for any sentence from FO$^2$ \citep{broeck2011completeness,van2014skolemization}.

\subsubsection{First-Order Logic With Counting Quantifiers}

An interesting extension of the 2-variable fragment of first order logic is obtained by adding {\em counting quantifiers} $\exists_{=k}$, $\exists_{\leq k}$ and $\exists_{\geq k}$ to it \citep{gradel1997two}. Satisfiability in this fragment of first-order logic is still decidable, although this fragment lacks the finite-model property that FO$^2$ enjoys. 

The counting quantifiers can be introduced as follows. Let $\omega$ be a possible world defined on a domain $\Delta$. The sentence $\exists_{\geq k} x : \psi(x)$ is true in $\omega$ if there are at least $k$ distinct elements $t_1,\dots,t_k \in \Delta$ such that $\omega \models \psi(t_i)$. The other two counting quantifiers can be defined using: $(\exists_{\leq k} x : \psi(x)) \Leftrightarrow \neg (\exists_{\geq k+1} \psi(x))$ and $(\exists_{=k} x : \psi(x)) \Leftrightarrow (\exists_{\leq k} : \psi(x) \wedge \exists_{\geq k} : \psi(x))$.

\begin{example}
To give an example of the expressive power of FO$^2$ with counting quantifiers, we can notice that it is easy to constrain binary relations to be functions using it. In all models of the sentence $\forall x \exists_{=1} y : f(x,y)$, $f$ is a function from the domain to itself. Additionally, if we wanted to force $f$ to be a bijection, we could use $(\forall x \exists_{=1} y : f(x,y)) \wedge (\forall y \exists_{=1} x : f(x,y))$ etc. 
\end{example}



\subsection{Weighted First-Order Model Counting}\label{sec:wfomc}

In this section we formally describe {\em weighted first-order model counting}. We start by defining an auxiliary concept, {\em cardinality of a relation}.

\begin{definition}[Cardinality of Relation]
Let $\omega$ be a possible world and $R$ be a $k$-ary predicate. The cardinality of $R$ in $\omega$ is defined as 
$$N(R,\omega) = |\{R(x_1,\dots,x_k) \in \omega \}|,$$
i.e.\ $N(R,\omega)$ is the number of ground atoms of the predicate $R$ that are true in $\omega$.
\end{definition}

\begin{example}
Let $\omega = \{ \fr(\alice,\bob), \fr(\alice,\eve), \sm(\alice) \}$. Then 
$$N(\fr,\omega) = |\{ \fr(\alice,\bob), \fr(\alice,\eve) \}| = 2.$$
\end{example}

Next we define weighted first-order model counting.

\begin{definition}[WFOMC, \citeauthor{broeck2011completeness}, \citeyear{broeck2011completeness}]
Let $\Omega$ be a set of possible worlds over a given domain $\Delta$ ($\Omega$ will often be the set of all possible worlds over $\Delta$), $\mathcal{R}$ be the set of predicates in the language, $w(P)$ and $\overline{w}(P)$ be functions from predicates to complex\footnote{Normally, in the literature, the weights of predicates are real numbers. However, we will also use complex-valued weights in this paper, therefore we define the WFOMC problem accordingly using complex-valued weights.} numbers (we call $w$ and $\overline{w}$ {\em weight functions}). Then for a given first-order logic sentence $\Gamma$, we define
$$
    \wfomc(\Gamma,w,\overline{w},\Omega) = \sum_{\omega \in \Omega : \omega \models \Gamma} \prod_{R \in \mathcal{R}} w(R)^{N(R,\omega)} \cdot \overline{w}(R)^{|\Delta|^{\textit{arity}(R)}-N(R,\omega)}.
$$

\noindent We also define
$$\wfomc(\Gamma,w,\overline{w},\Delta) \stackrel{\textit{def}}{=} \wfomc(\Gamma,w,\overline{w}, \Omega_{\Delta}),$$
where $\Omega_\Delta$ is the set of all possible worlds on the domain $\Delta$ (using the predicates from $\mathcal{R}$).
\end{definition}

\noindent In this paper, when we do not explicitly define weights of some predicate $R$, we will assume that $w(R) = \overline{w}(R) = 1$.

Next we illustrate WFOMC on a small example. Additionally, we show how WFOMC can be used for inference in Markov logic networks in Section \ref{sec:mlns}.

\begin{example}
Let $\Delta = \{ A, B \}$, $\mathcal{R} = \{ \textit{heads}, \textit{tails} \}$, $w(\textit{heads}) = 2$, $w(\textit{tails}) = \overline{w}(\textit{heads}) = \overline{w}(\textit{tails}) = 1$, and $\Gamma = \forall x : (\textit{heads}(x) \vee \textit{tails}(x)) \wedge (\neg \textit{heads}(x) \vee \neg \textit{tails}(x))$. There are four models of $\Gamma$ on the domain $\Delta$: $\omega_1 = \{\textit{heads}(A), \textit{heads}(B) \}$, $\omega_2 = \{\textit{heads}(A), \textit{tails}(B) \}$, $\omega_3 = \{ \textit{tails}(A), \textit{heads}(A) \}$ and $\omega_4 = \{ \textit{tails}(A), \textit{tails}(B) \}$. The resulting weighted model count is $\wfomc(\Gamma,w,\overline{w},\Delta) = 4 + 2 + 2 + 1 = 9$.
\end{example}

\subsubsection{Two Useful Technical Results About WFOMC}

We now describe two useful technical properties of WFOMC. The first of these is about bit complexity of WFOMC. Later in the paper, we will need to be able to bound the bit complexity of WFOMC and the next proposition does exactly that (its proof is located in the appendix).

\begin{proposition}\label{prop:wfomcbits}
Let $\Gamma$ be a first-order logic sentence, $\mathcal{R} = \{R_1,R_2,\dots,R_m \}$ be the set of predicates from a given first-order language, $\Delta$ be a domain and $\Omega_\Delta$ be the set of all possible worlds on the domain $\Delta$ using the predicates from $\mathcal{R}$. Let $w$ and $\overline{w}$ be weight functions that assign to each predicate $R \in \mathcal{R}$ a rational number $w(R) = w'(R)/w''(R)$ and $\overline{w}(R) = \overline{w}'(R)/\overline{w}''(R)$, where $w'(R)$, $w''(R)$, $\overline{w}'(R)$ and $\overline{w}''(R)$ are integers. Let us further define $M = \max_{R} \max\{ |w'(R)|, |w''(R)|, |\overline{w}'(R)|, |\overline{w}''(R)| \}$. Then $\wfomc(\Gamma,w,\overline{w},\Delta)$ can be represented as a rational number $a/b$. The number of bits needed to encode the integers $a$ and $b$ is bounded by a polynomial in $|\Delta|$ and $\log M$ (but not in $m$!).
\end{proposition}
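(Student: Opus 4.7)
The plan is to exhibit an explicit fraction representation $\wfomc(\Gamma, w, \overline{w}, \Delta) = a/b$, with integer $a$ and $b$, and then bound $\log_2 |a|$ and $\log_2 |b|$ separately by elementary per-term magnitude estimates. The key trick is to choose a common denominator that does not depend on the world $\omega$, so that the sum over models of $\Gamma$ leaves an integer-valued numerator.

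Concretely, I would first substitute $w(R) = w'(R)/w''(R)$ and $\overline{w}(R) = \overline{w}'(R)/\overline{w}''(R)$ into each per-world product and then multiply numerator and denominator of each factor by $(w''(R))^{|\Delta|^{\textit{arity}(R)} - N(R,\omega)}(\overline{w}''(R))^{N(R,\omega)}$. This rewrites every factor with the $\omega$-independent denominator $(w''(R)\overline{w}''(R))^{|\Delta|^{\textit{arity}(R)}}$. Taking the product over $R$ gives the common denominator
\[ b \;=\; \prod_{R \in \mathcal{R}} \bigl(w''(R)\overline{w}''(R)\bigr)^{|\Delta|^{\textit{arity}(R)}}, \]
and summing the resulting integer per-world numerators over $\omega \models \Gamma$ produces an integer $a$.

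The bit-complexity bounds are then routine. For $b$ one immediately has $\log_2 |b| \le 2 \bigl(\sum_R |\Delta|^{\textit{arity}(R)}\bigr) \log_2 M$. For $a$, each satisfying world contributes a product of magnitude at most $M^{2 \sum_R |\Delta|^{\textit{arity}(R)}}$, and there are at most $2^{\sum_R |\Delta|^{\textit{arity}(R)}}$ worlds in $\Omega_\Delta$, so by the triangle inequality $\log_2 |a| \le \bigl(\sum_R |\Delta|^{\textit{arity}(R)}\bigr)(1 + 2\log_2 M)$. Since the first-order language is fixed, $m$ and the maximum arity are constants of the language, so $\sum_R |\Delta|^{\textit{arity}(R)}$ is a polynomial in $|\Delta|$; both bounds are therefore polynomial in $|\Delta|$ and $\log M$, as claimed.

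There is no substantial technical obstacle here; the proof is essentially bookkeeping. The only point that requires care is the choice of an $\omega$-independent denominator at the outset. If instead one kept each world's denominator as-is and only combined fractions at the end, the common-denominator step would ostensibly involve a least-common-multiple over exponentially many terms, and bounding its bit length would be awkward. Fixing the denominator from the start (and refraining from attempting to reduce $a/b$ to lowest terms, which the statement does not require) sidesteps this entirely.
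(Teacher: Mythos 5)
Your proof is correct, and it is essentially the same elementary bookkeeping argument as the paper's: bound the magnitude of a world-independent denominator and of the integer numerator obtained by summing per-world contributions. The only organizational difference is that the paper first groups worlds by their cardinality vectors $(N(R_1,\omega),\dots,N(R_m,\omega))$ into polynomially many classes with integer multiplicities $C_{(n_1,\dots,n_m)} \le 2^{m|\Delta|^A}$ and bounds each class's fraction separately, whereas you fix the common denominator $\prod_R (w''(R)\overline{w}''(R))^{|\Delta|^{\textit{arity}(R)}}$ up front and sum over all worlds directly; your variant is marginally cleaner since it avoids the paper's (glossed-over) final step of combining polynomially many fractions with differing denominators.
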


At some point in the paper, we will also need to replace certain subformulas by their negations, without actually using negation. This is possible using a technique described in \citep{beame2015symmetric}, stated in Appendix A.2 of their paper, which we restate in the proposition below.\footnote{The same transform also appears in \citep{meert2016relaxed} under the name ``relaxed Tseitin transform''.}

\begin{proposition}\label{prop:removing-negation}
Let $\neg \psi(x_1,\dots,x_k)$ be a subformula of a first-order logic sentence $\Phi$ with $k$ free variables $x_1$, $\dots$, $x_k$. Let $A$, $B$ be two new predicates of arity $k$. Let $\Phi'$ denote the sentence obtained from $\Phi$ by replacing the subformula $\neg \psi(x_1,\dots,x_k)$ with $A(x_1,\dots,x_k)$. Let 
\begin{multline*}
    \Upsilon = \forall x_1 \forall x_2 \dots \forall x_k : ((\psi(x_1,\dots,x_k) \vee A(x_1,\dots,x_k)) \\
    \wedge (A(x_1,\dots,x_k) \vee B(x_1,\dots,x_k)) \wedge (\psi(x_1,\dots,x_k) \vee B(x_1,\dots,x_k)))
\end{multline*}
and extend the given weight functions $w$ and $\overline{w}$ by defining $w(A) = \overline{w}(A) = w(B) = 1$ and $\overline{w}(B) = -1$. Then it holds 
$$\wfomc(\Phi, w, \overline{w}, \Omega) = \wfomc(\Phi' \wedge \Upsilon, w, \overline{w}, \Omega_\textit{ext})$$
where $\Omega$ is the set of all possible worlds on the domain $\Delta$ w.r.t.\ a given first-order logic language $\mathcal{L}$ and $\Omega_{\textit{ext}}$ is the set of all possible worlds on the domain $\Delta$ w.r.t.\ $\mathcal{L}$ extended by predicates $A$ and $B$.
\end{proposition}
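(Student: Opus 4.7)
The plan is to expand both sides as sums over possible worlds and show that they give the same contribution for each world of the original language. First I would split the RHS by grouping each $\omega' \in \Omega_\textit{ext}$ according to its restriction $\omega$ to the original predicates and summing separately over interpretations $\alpha$ of $A$ and $\beta$ of $B$. Since the original predicates carry identical weights on both sides, the equality reduces to showing that for every $\omega \in \Omega$,
$$S(\omega) \;:=\; \sum_{\alpha, \beta} \mathbbm{1}[\omega \cup \alpha \cup \beta \models \Phi' \wedge \Upsilon] \; W_A(\alpha)\, W_B(\beta) \;=\; \mathbbm{1}[\omega \models \Phi],$$
where $W_A$ and $W_B$ denote the weight contributions of the $A$-atoms and $B$-atoms respectively.

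Next, because $B$ does not occur in $\Phi'$ and $\Upsilon$ is a universal conjunction of per-tuple clauses, I can pull the sum over $\beta$ inside, factoring it over the $|\Delta|^k$ tuples $\bar t = (t_1,\dots,t_k)$. For a fixed $\alpha$ and tuple $\bar t$, I would perform a $2 \times 2$ case analysis on $(\alpha(\bar t),\psi(\bar t))$, summing the two choices $\beta(\bar t) \in \{T,F\}$ with weights $w(B)=1$ and $\overline{w}(B)=-1$. The key calculation shows that the per-tuple sum equals: (i) $0$ when $\alpha(\bar t)=\psi(\bar t)=F$, since the clause $\psi \vee A$ fails for both $\beta$ values; (ii) $0$ when $\alpha(\bar t)=\psi(\bar t)=T$, since $\Upsilon_{\bar t}$ is then unconstrained in $\beta(\bar t)$ and the weights $+1$ and $-1$ cancel; and (iii) $1$ in the two remaining cases, which are precisely those where $\alpha(\bar t) = \neg \psi(\bar t)$.

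Taking the product over all tuples therefore annihilates every $\alpha$ except the canonical interpretation $\alpha^{*}$ that sets $A(\bar t)$ to the truth value of $\neg \psi(\bar t)$ in $\omega$. Since $w(A) = \overline{w}(A) = 1$, we have $W_A(\alpha^{*}) = 1$. Finally, by construction of $\Phi'$ (obtained from $\Phi$ by replacing $\neg \psi(\bar x)$ by $A(\bar x)$), the equivalence $\omega \cup \alpha^{*} \models \Phi' \iff \omega \models \Phi$ holds, giving $S(\omega) = \mathbbm{1}[\omega \models \Phi]$ as required.

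The only thing to be careful about is the sign bookkeeping in the four-way case split and the observation that $\Phi'$ is syntactically independent of $B$, which is what licenses the factorization $\sum_{\alpha}\sum_{\beta}$ above. No deeper obstacle is expected, since this is essentially a weighted Tseitin-style trick in which the negative weight $\overline{w}(B)=-1$ is chosen precisely to cancel the contribution of every $A$-interpretation that disagrees with $\neg \psi$ on at least one tuple.
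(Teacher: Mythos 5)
Your proposal is correct, and the case analysis checks out: the per-tuple sums are $1-1=0$ when $\alpha(\bar t)=\psi(\bar t)$ and $1$ otherwise, so only the interpretation $\alpha^*$ with $A \equiv \neg\psi$ survives, with net weight $1$. Note that the paper itself gives no proof of this proposition --- it restates the result from Appendix A.2 of \citep{beame2015symmetric} --- and your argument is essentially the standard derivation from that reference (factor the $B$-sum over ground tuples using $\overline{w}(B)=-1$ to cancel the unconstrained assignments), so you have simply supplied the omitted proof rather than found a different route.
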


\subsection{Domain-Lifted Inference}

Importantly, there are classes of first-order logic sentences for which weighted model counting can be solved in polynomial-time. In particular, let $\Omega$ be the set of all possible worlds over a given domain $\Delta$ and a given set of relations $\mathcal{R}$. As shown in \citep{van2014skolemization}, when the theory $\Gamma$ consists only of first-order logic sentences, each of which contains at most two logic variables, the weighted model count can be computed in time polynomial in the size of the domain $\Delta$. 
This is not the case in general when the number of variables in the formulas is greater than two unless P = \#P$_1$~\citep{beame2015symmetric}.\footnote{\#P$_1$ is the set of \#P problems over a unary alphabet.} Within statistical relational learning, the term used for problems that have such polynomial-time algorithms is {\em domain liftability}.

\begin{definition}[Domain liftability]
An algorithm for computing WFOMC with rational weights is said to be domain-liftable if it runs in time polynomial in the size of the domain and the number of bits needed to represent the weights.
\end{definition}

The definition of domain liftability presented here differs slightly from the original definition by \citet{broeck2011completeness} in that it also requires lifted algorithms to depend polynomially on the size of the representation of the formulas' weights. A justification for this definition follows from the work of \citet{jaeger} (Section 4.2). In particular, as pointed out by Jaeger, all existing domain-lifted exact-inference algorithms are also domain-lifted according to the definition that we use here. 

\subsubsection{An Application of WFOMC: Inference in Markov Logic Networks}\label{sec:mlns}

A Markov logic network \citep{Richardson2006} (MLN) is a set of weighted first-order logic formulas $(\alpha,w)$, where $w\in \mathbb{R}$ and $\alpha$ is a function-free first-order logic formula. The semantics are defined w.r.t.\ the groundings of the first-order logic formulas, relative to some finite set of constants $\Delta$, called the domain. An MLN $\Phi$ induces the probability distribution on possible worlds $\omega \in \Omega$ over a given domain:
\begin{equation}\label{eq:mln}
    P_{\Phi}(\omega) = \frac{1}{Z} \exp \left(\sum_{(\alpha,w) \in \Phi} w \cdot n(\alpha,\omega)\right),
\end{equation}
where $n(\alpha, \omega)$ is the number of groundings of $\alpha$ satisfied in $\omega$ (when $\alpha$ does not contain any variables, we define $n(\alpha,\omega) = \mathds{1}(\omega \models \alpha)$), and $Z$, called {\em partition function}, is a normalization constant to ensure that $p_{\Phi}$ is a probability distribution. We also allow infinite weights. A weighted formula of the form $(\alpha,+\infty)$ is understood as a hard constraint imposing that all worlds $\omega$ in which $n(\alpha,\omega)$ is not maximal have zero probability (this can also be deduced by taking the limit $w \rightarrow +\infty$). If all formulas in an MLN have at most $k$ variables, we call such an MLN {\em $k$-variable}.

Computation of the partition function $Z$ of an MLN can be converted to WFOMC. To compute the partition function $Z$ using weighted model counting, we proceed as \citep{van2011lifted}. Let an MLN $\Phi = \{(\alpha_1,w_1),\dots,(\alpha_m,w_m) \}$ over a set of possible worlds $\Omega$ be given.
For every $(\alpha_j,w_j) \in \Phi$, where the free variables in $\alpha_j$ are exactly $x_1$, $\dots$, $x_k$ and where $w \neq +\infty$, we create a new formula
$
    \forall x_1,\dots,x_k : \xi_j(x_1,\dots,x_k) \Leftrightarrow \alpha_j(x_1,\dots,x_k)
$
where $\xi_j$ is a new fresh predicate. When $w = +\infty$, we instead create a new formula $\forall x_1,\dots,x_k : \alpha_j(x_1,\dots,x_k)$. We denote the resulting set of new formulas $\Gamma$. Then we set
$w(\xi_j) = \exp{\left(w_j \right)}$
and $\overline{w}(\xi_j) = 1$ and for all other predicates we set both $w$ and $\overline{w}$ equal to 1. It is easy to check that then $\wfomc(\Gamma,w,\overline{w},\Omega) = Z$, which is what we needed to compute. To compute the marginal probability of a given first-order logic sentence $\gamma$, we have $\textit{P}_{\Phi}[X \models q] = \frac{\wfomc(\Gamma \cup \{ q \}, w, \overline{w},\Omega)}{\wfomc(\Gamma, w, \overline{w},\Omega)}$ where $X$ is sampled from the MLN.


For more examples of applications of weighted first-order model counting to statistical relational learning problems, we refer to \citep{van2013lifted}.




\section{Weighted Model-Counting Functions}\label{sec:modelcountingfunction}

Before getting to the {\em weighted model-counting functions}, we need to define notation for vectors of ``relation-cardinalities''. For a given possible world $\omega$ and a given list of predicates $\Psi = (R_1,R_2,...,R_m)$, we define the respective vector of relation-cardinalities as
$$\mathbf{N}(\Psi,\omega) \stackrel{def}{=} (n_1,\dots,n_m),$$
where $n_i = N(R_i,\omega)$ is the cardinality of the relation $R_i$ in $\omega$, i.e.\ the number of ground atoms of the form $R_i(c_1,\dots,c_{\operatorname{arity}(R_i)})$ that are true in $\omega$.

\begin{example}
Let $\omega = \{ \sm(\alice), \sm(\bob), \fr(\alice,\bob) \}$ and $\Psi = (\sm, \fr)$. Then the vector of relation-cardinalities is $\mathbf{N}(\Psi,\omega) = (2,1)$.
\end{example}

Next we define {\em model-counting function} (which we will also call {\em MC-function}).

\begin{definition}[Model-Counting Function]
Let $\Omega$ be a set of possible worlds and let $\Psi = (R_1,R_2, \dots,R_m)$ be a list of predicates. We define the model counting function as 
$$\operatorname{MC}_{\Psi,\Omega}(\mathbf{n}) = |\{ \omega \in \Omega | \mathbf{N}(\Psi,\omega) = \mathbf{n} \}|.$$
Given first-order logic sentence $\Gamma$ and a domain $\Delta$ we also define 
$$\operatorname{MC}_{\Psi,\Gamma,\Delta}(\mathbf{n}) \stackrel{\textit{def}}{=} \operatorname{MC}_{\Psi,\Omega_{\Gamma,\Delta}}(\mathbf{n}),$$ 
where $\Omega_{\Gamma,\Delta}$ is the set of models of the sentence $\Gamma$ on the domain $\Delta$ (assuming some given first-order language that also specifies the predicates).
\end{definition}

\noindent Intuitively, for any $\mathbf{n} \in \mathbb{Z}^m$, the model counting function gives us the number of possible worlds (from the given set $\Omega$) that satisfy $\mathbf{N}(\Psi,\omega) = \mathbf{n}$.

\begin{figure}
    \centering
    \includegraphics[scale=0.5]{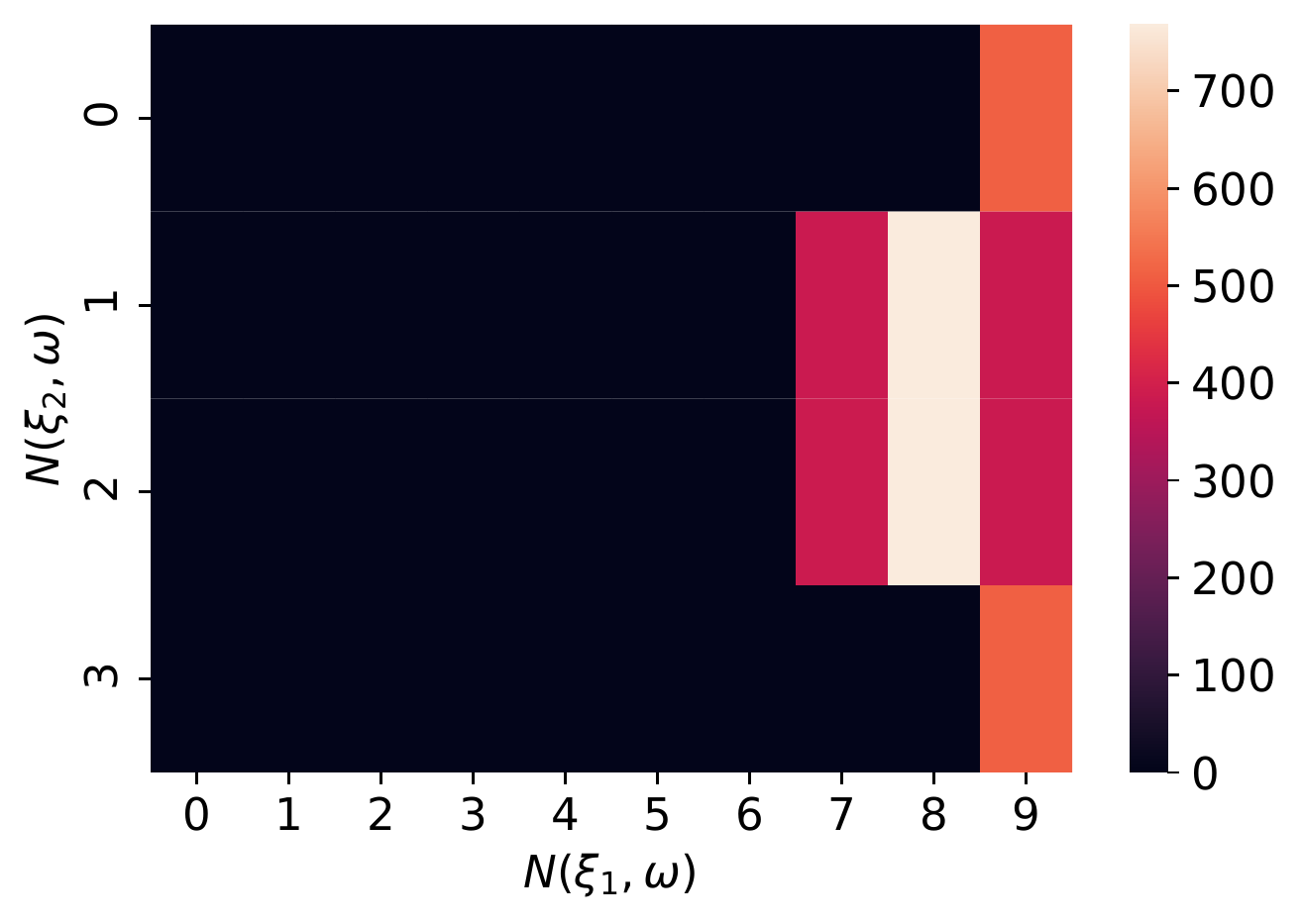}
    \includegraphics[scale=0.5]{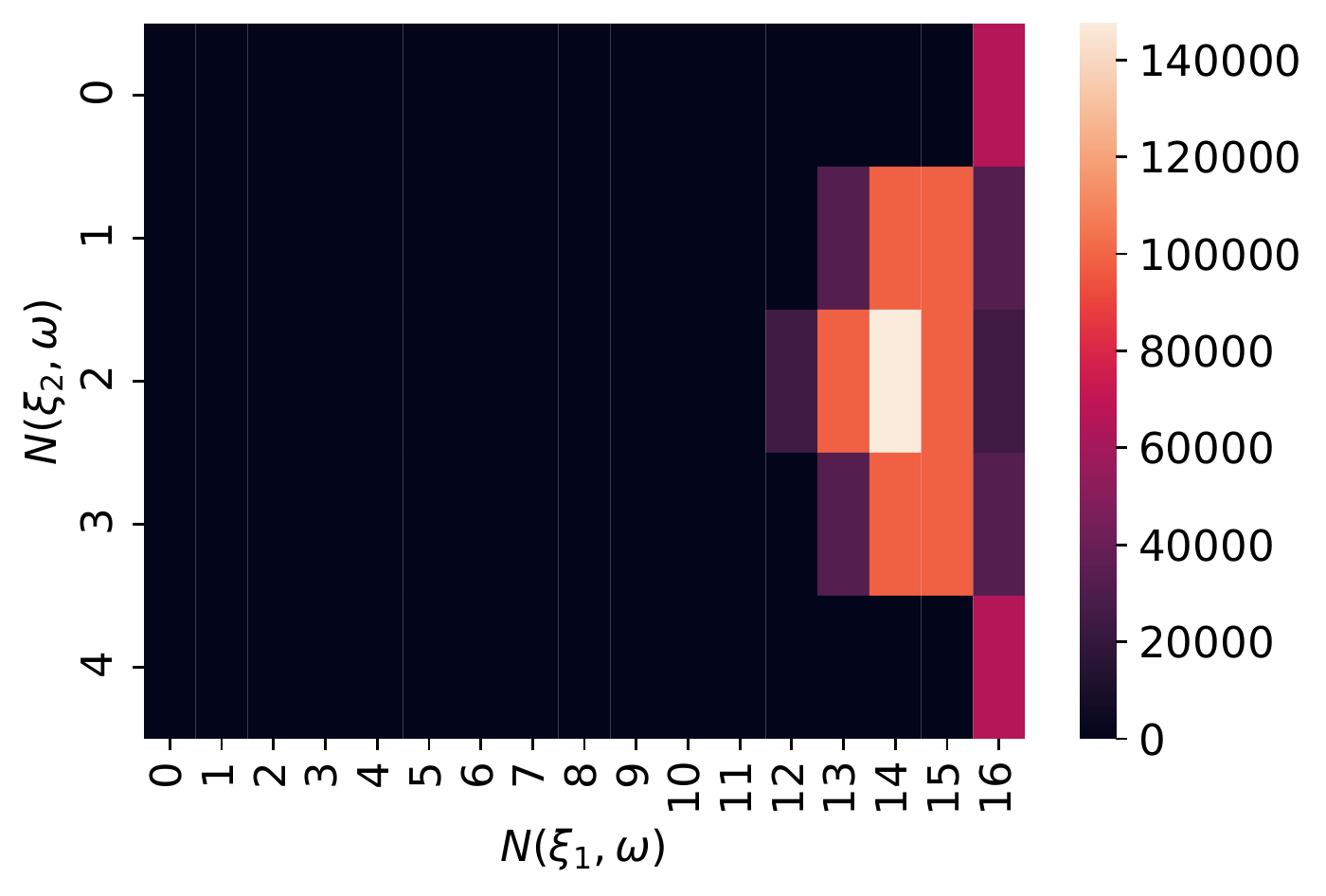}
    \caption{Model-counting functions from Example \ref{example:mc}. 
    }
    \label{fig:mc}
\end{figure}

\begin{example}\label{example:mc0}
Let us consider the domain $\Delta = \{A,B,C,D \}$, the sentence $\Gamma = \forall x : (\textit{heads}(x) \vee \textit{tails}(x)) \wedge (\neg \textit{heads}(x) \vee \neg \textit{tails}(x))$ and the list $\Psi = (\textit{heads})$. We assume that the first-order language over which the possible worlds are defined contains only the predicates $\textit{heads}$ and $\textit{tails}$. Then we have, for instance, $\mathbf{N}_{\Psi,\Gamma,\Delta}(0,0) = 0$ (since the sentence $\Gamma$ makes it impossible for any domain element to be neither {\em heads} nor {\em tails}) and $\mathbf{N}_{\Psi,\Gamma,\Delta}(1,3) = 4$ and so on.
\end{example}

\begin{example}\label{example:mc}
In Figure \ref{fig:mc}, we show examples of two MC-functions, $\operatorname{MC}_{\Psi,\Gamma,\Delta}(\mathbf{n})$, where $\Psi = (\xi_1,\xi_2)$, $\Gamma = (\forall x \forall y : \xi_1(x,y) \Leftrightarrow (\sm(x) \wedge \fr(x,y) \Rightarrow \sm(y))) \wedge (\forall x : \xi_2(x) \Leftrightarrow \sm(x))$, and $\Delta$ is a domain of size $3$ and $4$, respectively. Note that the form of the sentence $\Gamma$ corresponds to the encoding of an MLN with two formulas $\alpha = \sm(x) \wedge \fr(x,y) \Rightarrow \sm(y)$ and $\beta = \sm(x)$ (cf. Section \ref{sec:mlns}).
\end{example}

The concept of model-counting function can be straightforwardly generalized to weighted model-counting functions that we define next. Weighted model-counting functions are the main ``work-horses'' that we use in the rest of the paper.

\begin{definition}[Weighted Model-Counting Function]
Let $\Omega$ be a set of possible worlds and let $\Psi = (R_1,R_2, \dots,R_m)$ be a list of predicates. We define the model counting function as: 
$$\operatorname{WMC}_{\Psi,\Omega}(\mathbf{n}, w, \overline{w}) = \wfomc(\top,w,\overline{w},\{ \omega \in \Omega | \mathbf{N}(\Psi,\omega) = \mathbf{n} \}),$$
where $\top$ (tautology) is the trivial sentence which is always true. Given a first-order logic sentence $\Gamma$ and a domain $\Delta$, we also define 
$$\operatorname{WMC}_{\Psi,\Gamma,\Delta}(\mathbf{n}, w, \overline{w}) \stackrel{\textit{def}}{=} \operatorname{WMC}_{\Psi,\Omega_{\Gamma,\Delta}}(\mathbf{n}, w, \overline{w}),$$ 
where $\Omega_{\Gamma,\Delta}$ is the set of models of the sentence $\Gamma$ on the domain $\Delta$ (assuming some given first-order language that specifies the predicates).
\end{definition}

\noindent Clearly, model-counting functions are a special case of weighted model counting functions for $w \equiv 1$ and $\overline{w} \equiv 1$.

In the next subsection we explain how to compute weighted model-counting functions using a WFOMC oracle.


\subsection{Computing Weighted Model-Counting Functions}

At first it may not be obvious how to compute weighted model-counting functions efficiently. In \citep{kuzelka.complex}, we described a method based on discrete Fourier transform that can be used for computing weighted model-counting functions (we describe this method in the appendix). A downside of this method is that it requires computing WFOMC over complex numbers. Even though existing lifted inference algorithms, which normally only count over real-valued weights, can be straightforwardly modified to also allow complex numbers, it would be nicer if we could do the same without modifying them. In this section, we show that it is possible to compute weighted model-counting functions efficiently also without WFOMC over complex numbers. In particular, we prove the following proposition.

\begin{proposition}\label{prop:dft}
Let $\Delta$ be a set of domain elements, $\Gamma$ be a first-order logic sentence and $\Psi = (R_1, \dots, R_m)$ be a list of relations.
If $\wfomc(\Gamma,w,\overline{w},\Delta)$ can be computed in time polynomial in $|\Delta|$ and in the number of bits needed to encode $w$ and $\overline{w}$ then the corresponding WMC-function  $\operatorname{WMC}_{\Psi,\Gamma,\Delta}(\mathbf{n},w,\overline{w})$ can also be computed in time polynomial in $|\Delta|$ and in the number of bits needed to encode $w$ and $\overline{w}$.
\end{proposition}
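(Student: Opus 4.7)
The plan is to encode $\operatorname{WMC}_{\Psi,\Gamma,\Delta}(\mathbf{n},w,\overline{w})$ as a coefficient of a polynomial whose values can be obtained from the WFOMC oracle, and then to recover that coefficient by Lagrange interpolation. I would introduce formal parameters $u_1,\dots,u_m$ and the modified weight function $w_{\vec{u}}$ defined by $w_{\vec{u}}(R_i) = u_i \cdot w(R_i)$ for every $R_i \in \Psi$ and $w_{\vec{u}}(R) = w(R)$ for every other predicate. Expanding the definition of WFOMC and grouping worlds by their relation-cardinality vector immediately gives
\[
F(u_1,\dots,u_m) \;\stackrel{\textit{def}}{=}\; \wfomc(\Gamma,w_{\vec{u}},\overline{w},\Delta) \;=\; \sum_{\mathbf{n}} \operatorname{WMC}_{\Psi,\Gamma,\Delta}(\mathbf{n},w,\overline{w}) \cdot u_1^{n_1} \cdots u_m^{n_m},
\]
so the target value is precisely the coefficient of $u_1^{n_1}\cdots u_m^{n_m}$ in the polynomial $F$.

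The degree of $F$ in each $u_i$ is at most $d_i := |\Delta|^{\textit{arity}(R_i)}$, since $N(R_i,\omega) \leq d_i$ for every world $\omega$. I would choose $d_i+1$ distinct positive integer evaluation points $S_i = \{1,2,\dots,d_i+1\}$ and evaluate $F$ at every point of the product grid $S_1 \times \dots \times S_m$; each such evaluation is a single call to the assumed WFOMC oracle with modified rational weights $v_i \cdot w(R_i)$, whose bit-length exceeds that of $w(R_i)$ by only $O(\log |\Delta|)$ bits. The total number of grid evaluations is $\prod_i (d_i+1)$, which is polynomial in $|\Delta|$ because the number $m$ of predicates in $\Psi$ and their arities are fixed parts of the problem description rather than variable parameters. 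Hence assembling the full table of values is done in polynomial time using the WFOMC oracle.

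From this table I would recover all coefficients of $F$ by iterated univariate Lagrange interpolation, one variable at a time: for any fixed grid values of $u_2,\dots,u_m$ the restriction $F(\,\cdot\,,v_2,\dots,v_m)$ is a univariate polynomial of degree at most $d_1$ that can be reconstructed from its $d_1+1$ tabulated values; after doing this for every choice of $(v_2,\dots,v_m) \in S_2\times\dots\times S_m$ we obtain the coefficients of the expansion of $F$ in $u_1$ evaluated on the grid $S_2\times\dots\times S_m$, and we can iterate the same step for $u_2,\dots,u_m$ in turn. Proposition~\ref{prop:lagrange} ensures that each univariate Lagrange interpolation inflates the bit-complexity of the resulting rational coefficients only polynomially in the degree and in the bit-size of the input values; the main point to check carefully is that iterating this $m$ times keeps the overall bit-complexity polynomial, which is where one uses the fact that $m$ and the arities are fixed constants of the problem, so the polynomial blow-ups compose a bounded number of times. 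Reading off the coefficient of $u_1^{n_1}\cdots u_m^{n_m}$ from the resulting table then yields $\operatorname{WMC}_{\Psi,\Gamma,\Delta}(\mathbf{n},w,\overline{w})$.
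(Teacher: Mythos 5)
Your proposal is correct and follows essentially the same strategy as the paper: view the WFOMC with scaled weights on the predicates of $\Psi$ as a polynomial whose coefficients are the values of the WMC-function, evaluate that polynomial via the WFOMC oracle at polynomially many integer points, and recover the coefficients by Lagrange interpolation, with the bit-complexity controlled by Propositions \ref{prop:lagrange} and \ref{prop:wfomcbits}. The only difference is mechanical: the paper collapses the multivariate polynomial into a univariate one via the Kronecker-style substitution $u_i \mapsto t^{(M_1+1)\cdots(M_{i-1}+1)}$ and performs a single univariate interpolation, whereas you interpolate variable-by-variable on a product grid; both are valid, and your variant keeps the oracle's input weights smaller at the price of having to argue (as you do) that the $m$-fold composition of polynomial bit-blow-ups remains polynomial because $m$ and the arities are fixed.
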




\begin{proof}

We prove this proposition by showing how to compute weighted model-counting functions using Lagrange interpolation when we have access to a WFOMC oracle.

Let $w^*$ and $\overline{w}^*$ be weight functions defined by: $w^*(R) = \overline{w}^*(R) = 1$ for all $R \in \Psi$ and $w^*(R) = w(R)$ and $\overline{w}^*(R) = \overline{w}(R)$ for all the other predicates $R \not\in \Psi$. Then we can write: 
\begin{equation*}
    \wfomc(\Gamma,w,\overline{w}^*,\Delta) = \sum_{\mathbf{n} \in \mathcal{D}} \operatorname{WMC}_{\Psi,\Gamma,\Delta}(\mathbf{n}, w^*, \overline{w}^*) \cdot \prod_{i=1}^m w(R_i)^{n_i}
\end{equation*}
where $n_i$ denotes the $i$-th component of $\mathbf{n}$, 
$$\mathcal{D} = \left\{0,1,\dots, M_1 \right\} \times \left\{0,1,\dots,M_2\right\}\times \dots \times \left\{ 0, 1, \dots, M_m \right\},$$
and $M_1 = |\Delta|^{\textit{arity}(R_1)}$, $M_2 = |\Delta|^{\textit{arity}(R_2)}$, $\dots$, $M_m = |\Delta|^{\textit{arity}(R_m)}$.


It follows that, when we fix all weights of all predicates $R \not\in \Psi$ (i.e. if we keep them constant), $\wfomc(\Gamma,w,\overline{w}^*,\Delta)$ becomes a polynomial in the weights $w(R_1), \dots, w(R_m)$. Let us denote this polynomial as $W(w_1,\dots,w_m)$. Importantly, $\operatorname{WMC}_{\Psi,\Gamma,\Delta}(\mathbf{n}, w^*, \overline{w}^*)$ is the coefficient of $\prod_{i=1}^m w(R_i)^{n_i}$ in this polynomial, where $n_i$ denotes the $i$-th component of $\mathbf{n}$. 
If we can extract the coefficients of the monomials efficiently, it will mean that we can efficiently compute the WMC-function using an oracle for WFOMC. For that we first introduce another, univariate, polynomial: 

\begin{equation*}
    W_0(t) \stackrel{def}{=} W\left(t, t^{M_1+1}, t^{(M_1 + 1)(M_2 + 1)}, \dots, t^{(\dots (M_1 + 1)(M_2 + 1)) \dots) (M_{m-1}+1)}\right).
\end{equation*}

\noindent The polynomial $W_0$ has the convenient property that the coefficient of 
$$t^{n_1 + (M_1+1) n_2 + \dots + (\dots (M_1 + 1)(M_2 + 1)) \dots) n_m}$$ 
is equal to the coefficient of 
$w_1^{n_1} w_2^{n_2} \dots w_m^{n_m}$ 
in $W(w_1,w_2,\dots,w_m)$. Let us denote this coefficient by $A_{n_1,\dots,n_m}$. It follows that $A_{n_1,\dots,n_m}$ is also equal to $\operatorname{WMC}_{\Psi,\Gamma,\Delta}(\mathbf{n}, w^*, \overline{w}^*)$, from which we can then obtain 

\begin{multline*}
    \operatorname{WMC}_{\Psi,\Gamma,\Delta}(\mathbf{n}, w, \overline{w})  = \operatorname{WMC}_{\Psi,\Gamma,\Delta}(\mathbf{n}, w^*, \overline{w}^*) \cdot \prod_{i=1}^m w(R_i)^{n_i} \cdot \overline{w}(R)^{|\Delta|^{\textit{arity}(R_i)}-N(R_i,\omega)} \\
    = A_{n_1,\dots,n_m} \cdot \prod_{i=1}^m w(R_i)^{n_i} \cdot \overline{w}(R)^{|\Delta|^{\textit{arity}(R_i)}-N(R_i,\omega)}.
\end{multline*}

\noindent The only missing part is to show that we can extract the coefficient $A_{n_1,\dots,n_m}$ efficiently using a WFOMC oracle. This can be done using Lagrange interpolation (cf Section \ref{sec:lagrange}). First, we define $|\mathcal{D}|+1$ points $(x_i,y_i)$ for the polynomial interpolation problem: $(1, W_0(1))$, $(2, W_0(2))$, $\dots$, $(|\mathcal{D}|+1, W_0(|\mathcal{D}|+1))$. Using Proposition \ref{prop:wfomcbits}, we have that the number of bits needed to encode each of $W_0(1)$, $W_0(2)$, $\dots$, $W_0(|\mathcal{D}|+1)$ is polynomial in $|\mathcal{D}|$. Combining that with Proposition~\ref{prop:lagrange}, we then have that the number of bits needed to encode the coefficients of the polynomial interpolating the points $(1, W_0(1))$, $(2, W_0(2))$, $\dots$, $(|\mathcal{D}|+1, W_0(|\mathcal{D}|+1))$ grows only polynomially with $|\mathcal{D}|$. Since $|\mathcal{D}|$ is itself bounded by a polynomial in the size of the domain~$|\Delta|$, it follows that we can extract the coefficients and consequently the WMC-function that we want to compute in time polynomial in~$|\Delta|$ and the number of bits needed to encode $w$ and $\overline{w}$.

\end{proof}

\begin{remark}
We could do a bit better in terms of practical efficiency than the construction from the above proof if we replaced the univariate Lagrange interpolation by its multivariate version (we could use, e.g., Lemma 5 from \citep{koiran2011interpolation}). Then we would only need to evaluate WFOMC on weights from the set $\{0, 1,2,\dots,|\mathcal{D}|\}$. For simplicity, in the proof of the above proposition we opted for the more elementary approach, which is enough for our purposes.
\end{remark}

\section{WFOMC with Cardinality Constraints}

In this paper, a {\em simple cardinality constraint} is en expression of the form $|R| \in \mathcal{A}$ where $R$ is a predicate and $\mathcal{A} \subseteq \mathbb{N}$. A possible world $\omega$ satisfies a given cardinality constraint $|R| = k$ if $N(R,\omega) \in \mathcal{A}$, i.e.\ if the number of ground atoms $R_i(t_1,\dots,t_n)$ that are true in $\omega$ is in $\mathcal{A}$. We write $\omega \models (|R| \in \mathcal{A})$ when the cardinality constraint $|R| \in \mathcal{A}$ is satisfied in $\omega$. We will also use the notation $|R| \bowtie k$, where $\bowtie \in \{=,\leq,\geq,<,> \}$ and $k \in \mathbb{N}$. So, e.g., $|R| \leq k$ is a short for $|R| \in \{0,1,\dots,k\}$. Finally, we allow cardinality constraints as atomic formulas in first-order logic formulas. For instance, $(|f| = 2) \wedge (\forall x \forall y : f(x,y) \Rightarrow f(y,x))$ is a valid formula (its models can be interpreted as undirected graphs with exactly one edge) and the satisfaction relation $\models$ is extended naturally.

Computing WFOMC with cardinality constraints can be done using WMC-functions. Moreover as the next proposition shows, domain-liftability is preserved when we add cardinality constraints to a sentence which is domain-liftable.

\begin{proposition}\label{prop:cardinality}
Let $\Gamma$ be a first-order logic sentence, $\psi(x_1,\dots,x_m)$ be propositional logic formula and let $\Upsilon = \psi(|R_{i_1}| \bowtie k_1, \dots, |R_{i_m}| \bowtie k_m)$, where $\psi$ is a Boolean formula and $\bowtie \in \{=,\leq,\geq,<,> \}$. If computing the WFOMC of $\Gamma$ is domain-liftable then computing the WFOMC of $\Gamma \wedge \Upsilon$ is also domain-liftable.
\end{proposition}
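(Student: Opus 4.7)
The plan is to reduce computing $\wfomc(\Gamma \wedge \Upsilon, w, \overline{w}, \Delta)$ to summing values of the weighted model-counting function of $\Gamma$ over all cardinality vectors $\mathbf{n}$ that are compatible with the Boolean combination $\Upsilon$. Concretely, let $\Psi = (R_{i_1}, \dots, R_{i_m})$ be the list of predicates appearing in $\Upsilon$, and for each vector $\mathbf{n} = (n_1, \dots, n_m)$ with $n_j \in \{0, 1, \dots, |\Delta|^{\textit{arity}(R_{i_j})}\}$, say that $\mathbf{n} \models \Upsilon$ if the Boolean formula obtained by substituting the truth values of $n_j \bowtie k_j$ into $\psi$ evaluates to true. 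Since every model $\omega$ of $\Gamma$ has a unique cardinality vector $\mathbf{N}(\Psi, \omega)$, partitioning the models of $\Gamma$ by this vector yields
\begin{equation*}
    \wfomc(\Gamma \wedge \Upsilon, w, \overline{w}, \Delta) = \sum_{\mathbf{n} \in \mathcal{D} : \mathbf{n} \models \Upsilon} \operatorname{WMC}_{\Psi,\Gamma,\Delta}(\mathbf{n}, w, \overline{w}),
\end{equation*}
where $\mathcal{D} = \{0,\dots,|\Delta|^{\textit{arity}(R_{i_1})}\} \times \cdots \times \{0,\dots,|\Delta|^{\textit{arity}(R_{i_m})}\}$.

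Next I would argue that each ingredient of the sum can be produced in polynomial time. Because the sentence (and in particular the constant $m$ and the constraints' arities) is fixed, $|\mathcal{D}|$ is bounded by a polynomial in $|\Delta|$. Checking whether $\mathbf{n} \models \Upsilon$ is a constant-time evaluation of the fixed Boolean formula $\psi$ on $m$ inequalities. For each $\mathbf{n} \in \mathcal{D}$, Proposition~\ref{prop:dft} tells us that $\operatorname{WMC}_{\Psi,\Gamma,\Delta}(\mathbf{n}, w, \overline{w})$ can be computed in time polynomial in $|\Delta|$ and in the bit size of the weights, using the assumed domain-liftable algorithm for $\Gamma$ as the WFOMC oracle. (In fact one Lagrange-interpolation computation produces all the values simultaneously, but we do not need this to get a polynomial bound.) Summing polynomially many terms preserves the polynomial time bound, and by Proposition~\ref{prop:wfomcbits} the intermediate rationals involved have polynomially many bits, so the arithmetic itself stays polynomial.

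There is essentially no hard step here; the work was done by Proposition~\ref{prop:dft}. The only point that needs a little care is making sure that when the input to our algorithm is just $|\Delta|$ (together with the weight encodings), the factor $|\mathcal{D}| = \prod_{j=1}^m (|\Delta|^{\textit{arity}(R_{i_j})}+1)$ is indeed polynomial in $|\Delta|$ because $m$ and the arities come from the fixed sentence. With that observation, the algorithm enumerates $\mathbf{n} \in \mathcal{D}$, filters by $\Upsilon$, and accumulates the corresponding WMC-function values, giving a domain-liftable procedure for $\Gamma \wedge \Upsilon$.
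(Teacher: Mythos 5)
Your proposal is correct and follows essentially the same route as the paper's proof: both decompose $\wfomc(\Gamma\wedge\Upsilon,w,\overline{w},\Delta)$ as a sum of $\operatorname{WMC}_{\Psi,\Gamma,\Delta}$ over the polynomially many cardinality vectors $\mathbf{n}\in\mathcal{D}$ satisfying $\Upsilon$, with each value supplied by Proposition~\ref{prop:dft}. Your write-up is if anything a bit more explicit about the bit-complexity and fixed-sentence caveats than the paper's version.
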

\begin{proof}
Let $\Psi = (R_{i_1},\dots,R_{i_m})$. Since computing $\wfomc(\Gamma,w,\overline{w},\Delta)$ is domain-liftable, so is computing $\operatorname{WMC}_{\Psi,\Gamma,\Delta}(\mathbf{n})$, which follows from Proposition \ref{prop:dft}. In addition we only need to evaluate the WMC-function on a set of polynomially-many (in $|\Delta|$) points, specifically on the set $\mathcal{D} = \{0,1,\dots, M_1 \} \times \{0,1,\dots, M_2\}\times \dots \times \{ 0, 1, \dots, M_m \}$ 
where $M_1 = |\Delta|^{\textit{arity}(R_{i_1})},$ $M_2 = |\Delta|^{\textit{arity}(R_{i_2})}$, $\dots$, $M_m = |\Delta|^{\textit{arity}(R_{i_m})}.$ Finally, we can compute $\wfomc(\Gamma \wedge \Upsilon,w,\overline{w},\Delta)$ as:
\begin{equation*}
    \wfomc(\Gamma,w,\overline{w},\Delta) = \sum_{\mathbf{n} \in \mathcal{D}} \psi(|R_{i_1}| \bowtie k_1, \dots, |R_{i_m}| \bowtie k_m) \cdot  \operatorname{WMC}_{\Psi,\Gamma,\Delta}(\mathbf{n})
\end{equation*}
where $n_i$ denotes the $i$-th component of the vector $\mathbf{n}$. Hence, $\wfomc(\Gamma \wedge \Upsilon,w,\overline{w},\Delta)$ can be computed in time polynomial in the size of the domain which finishes the proof.
\end{proof}


\begin{remark}
The techniques from this section do not apply only to encoding of cardinality constraints. It is easy to replace the Boolean formula $\psi$ by a function to rational numbers and show that one can efficiently compute weighted first-order model counts with ``non-multiplicative'' weight functions, i.e.\ with weight functions that only depend on $\mathbf{N}(\Psi, \omega)$. We will not need this more general setting in this paper.
\end{remark}

\section{WFOMC with Functionality Constraints\protect\footnote{In the next section, we generalize the results presented here to allow constraints of the form $\forall x \exists_{=k} y : \psi(x,y)$, of which $\forall x \exists_{=1} y : \psi(x,y)$ is a special case. However, the case of functionality constraints, besides being easier to understand, is important on its own and has been studied in the literature \citep{DBLP:conf/lics/KuusistoL18}. We note that whereas \citet{DBLP:conf/lics/KuusistoL18} only allowed one functionality constraint, here we already allow multiple functionality constraints.}}

A {\em functionality constraint} is a constraint expressed by a first-order-logic sentence of the form 
$$\forall x \exists_{=1} y : \psi(x,y),$$ 
which asserts that for every $x$ there is exactly one $y$ such that $\psi(x,y)$ is true. In this section we show how to compute WFOMC of a 2-variable first-order logic sentence with an arbitrary number of functionality constraints while still guaranteeing runtime polynomial in the domain size $|\Delta|$. 

First, we can notice that we can replace any functional constraint of the form $\forall x \exists_{=1} y : \psi(x,y)$, where $\psi(x,y)$ is a formula, with free variables exactly $x$ and $y$, by $(\forall x \forall y : \xi(x,y) \Leftrightarrow \psi(x,y) ) \wedge (\forall x \exists_{=1} y : \xi(x,y))$, where $\xi$ is a fresh predicate not occurring anywhere else. Therefore we will assume without loss of generality that the only functional constraints are of the form $\forall x \exists_{=1} y : R(x,y)$ where $R$ is a predicate. The main result of this section is then the following theorem.

\begin{theorem}\label{prop:function}
Let $\Gamma$ be an FO$^2$ sentence and $\Upsilon = (|R_{i_1}| \bowtie k_1) \wedge \dots \wedge (|R_{i_m}| \bowtie k_m) \wedge (\forall x \exists_{=1} y : R_{i_1}(x,y)) \wedge \dots \wedge (\forall x \exists_{=1} y : R_{i_{m'}}(x,y))$ be a conjunction of cardinality and functionality constraints where $\bowtie \in \{=,\leq,\geq,<,> \}$. Computing the WFOMC of $\Gamma \wedge \Upsilon$ is domain-liftable.
\end{theorem}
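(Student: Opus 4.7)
The plan is to reduce the theorem to a straightforward application of Proposition~\ref{prop:cardinality} combined with the known domain-liftability of FO$^2$. The crucial observation I would exploit is that, for any binary predicate $R$ and any domain $\Delta$, the functionality constraint $\forall x \exists_{=1} y : R(x,y)$ is equivalent to the conjunction
\[
(\forall x \exists y : R(x,y)) \;\wedge\; (|R| = |\Delta|).
\]
The first conjunct forces at least one $R$-successor per $x$, so $|R| \geq |\Delta|$; the cardinality constraint then forces equality, leaving exactly one successor per $x$. This rewrite moves each functionality constraint out of the counting-quantifier fragment and splits it into a plain FO$^2$ sentence plus a cardinality constraint.

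Given this, the proof becomes essentially bookkeeping. Following the paragraph preceding the theorem statement, I first normalize every functionality constraint so that the body is a predicate atom $R_{i_j}(x,y)$. I then rewrite $\Gamma \wedge \Upsilon$ as $\Gamma' \wedge \Upsilon'$, where
\[
\Gamma' = \Gamma \wedge \bigwedge_{j=1}^{m'} \forall x \exists y : R_{i_j}(x,y)
\]
is an FO$^2$ sentence (note that no counting quantifier and no equality is used here), and
\[
\Upsilon' = \bigwedge_{j=1}^{m}\bigl(|R_{i_j}| \bowtie k_j\bigr) \;\wedge\; \bigwedge_{j=1}^{m'}\bigl(|R_{i_j}| = |\Delta|\bigr)
\]
is a conjunction of atomic cardinality constraints, hence a Boolean formula of the form required by Proposition~\ref{prop:cardinality}. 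By the result of Van den Broeck et al., $\wfomc(\Gamma', w, \overline{w}, \Delta)$ is domain-liftable, and Proposition~\ref{prop:cardinality} then lifts this to $\Gamma' \wedge \Upsilon'$, which is logically equivalent to $\Gamma \wedge \Upsilon$.

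I do not anticipate any serious obstacle. The one subtle point worth flagging is that the threshold in $|R_{i_j}| = |\Delta|$ depends on the domain size rather than being a fixed constant; a quick inspection of the proof of Proposition~\ref{prop:cardinality} confirms that this is harmless, since the thresholds enter only through evaluation of the Boolean formula $\psi$ on a polynomially-sized grid of cardinality vectors, and $|\Delta|$ is of course known when this evaluation is carried out. So the entire argument collapses to stating the equivalence above, verifying that $\Gamma'$ lies in FO$^2$, and quoting the two prior results.
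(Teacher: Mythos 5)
Your proposal is correct and follows essentially the same route as the paper: the equivalence $(\forall x \exists_{=1} y : R(x,y)) \Leftrightarrow (\forall x \exists y : R(x,y)) \wedge (|R| = |\Delta|)$ is exactly the paper's Lemma~\ref{lemma:lemma1} (proved there by the same pigeonhole argument you sketch), and the remainder is the same reduction to Proposition~\ref{prop:cardinality} plus domain-liftability of FO$^2$. Your remark that the threshold $|\Delta|$ is domain-dependent but harmless is a worthwhile observation the paper leaves implicit.
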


Next we prove a simple lemma that will allow us to reduce WFOMC with functionality (and possibly also cardinality) constraints to WFOMC involving only cardinality constraints and no functionality constraints.

\begin{lemma}\label{lemma:lemma1}
Let $\Omega$ be the set of all possible worlds on a domain $\Delta$. Let $\Gamma$ be a first-order logic sentence. Let
$\Phi = (\forall x \exists_{=1} y : R_{i_1}(x,y)) \wedge \dots \wedge (\forall x \exists_{=1} y : R_{i_h}(x,y))$
and
$
    \Phi' = (\forall x \exists y: R_{i_1}(x,y)) \wedge (|R_{i_1}| = |\Delta|) \wedge
    \dots \wedge (\forall x \exists y: R_{i_h}(x,y)) \wedge (|R_{i_h}| = |\Delta|).
$
Then for all $\omega \in \Omega$: $(\omega \models \Gamma \wedge \Phi) \Leftrightarrow (\omega \models \Gamma \wedge \Phi')$.
\end{lemma}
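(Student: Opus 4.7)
The plan is to prove the lemma by observing that $\Gamma$ appears on both sides of the biconditional, so it suffices to show $\omega \models \Phi$ iff $\omega \models \Phi'$, and moreover both $\Phi$ and $\Phi'$ are conjunctions of constraints, one per predicate $R_{i_j}$, so the argument reduces to showing, for a single predicate $R$, that
\[
\omega \models \forall x \exists_{=1} y : R(x,y) \quad \Longleftrightarrow \quad \omega \models (\forall x \exists y : R(x,y)) \wedge (|R| = |\Delta|).
\]

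For the forward direction, I would assume $\omega \models \forall x \exists_{=1} y : R(x,y)$. This trivially implies $\forall x \exists y : R(x,y)$ (from $\exists_{=1}$ we get $\exists$). For the cardinality equation, I would introduce the count $c(x) = |\{y \in \Delta : R(x,y) \in \omega\}|$ for each $x \in \Delta$; by assumption $c(x) = 1$ for each $x$, so $N(R,\omega) = \sum_{x \in \Delta} c(x) = |\Delta|$, which is exactly $|R| = |\Delta|$.

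For the backward direction, I would again introduce $c(x)$ as above. The hypothesis $\forall x \exists y : R(x,y)$ gives $c(x) \geq 1$ for every $x \in \Delta$, while the cardinality constraint gives $\sum_{x \in \Delta} c(x) = |\Delta|$. Since there are $|\Delta|$ nonnegative integer summands, each at least $1$, adding up to $|\Delta|$, we must have $c(x) = 1$ for every $x$, which is exactly the statement $\forall x \exists_{=1} y : R(x,y)$.

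There is no real obstacle here; the lemma is essentially a pigeonhole-style counting observation saying that a total relation $R \subseteq \Delta \times \Delta$ with $|R| = |\Delta|$ must in fact be (the graph of) a function. The only things to be careful about are (i) making clear that the equivalence is established predicate-by-predicate so that conjoining over $i_1, \dots, i_h$ preserves it, and (ii) noting that $\Gamma$ plays no role in either direction and simply rides along on both sides of the biconditional.
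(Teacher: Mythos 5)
Your proof is correct and takes essentially the same approach as the paper: both reduce the claim to the observation that a total relation on $\Delta$ with cardinality $|\Delta|$ must assign exactly one image to each element, established by counting the tuples of $R$ grouped by their first argument. Your direct summation over $c(x)$ is a slightly cleaner packaging of the paper's pigeonhole/contradiction argument, but the underlying idea is identical.
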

\begin{proof}
Let $R$ be any of the relations $R_{i_1}$, $\dots$, $R_{i_h}$. 
The constraint $\forall x \exists_{=1} y : R(x,y)$ can be rewritten as: (i) $\forall x \exists y : R(x,y)$ and (ii) $\forall x,y,z: R(x,y) \wedge R(x,z) \Rightarrow y = z$.
($\Rightarrow$) It follows from (i) that $|R| \geq |\Delta|$. If $|R| > |\Delta|$ then by the pigeon-hole principle, there must be at least one $s \in \Delta$ such that $R(s,t)$ and $R(s,t')$ for some $t \neq t' \in \Delta$ which contradicts (ii). Hence, $\forall x \exists_{=1} y : R(x,y)$ implies $|R| = |\Delta|$ and $\forall x \exists y : R(x,y)$.
($\Leftarrow$) What we need to show is that if $(\forall x \exists y: R(x,y)) \wedge (|R| = |\Delta|)$ holds then (i) and (ii) must hold as well. Clearly, (i) must hold. So let us suppose, for contradiction, that $(\forall x \exists y: R(x,y)) \wedge (|R| = |\Delta|)$ holds but there is some $s \in \Delta$ such that $R(s,t)$ and $R(s,t')$ for some $t \neq t' \in \Delta$. We have $|\{ (x,y) \in \Delta^2 | R(x,y) \wedge x \neq s \}| \geq |\Delta|-1$ (from $\forall x \exists y : R(x,y)$). Therefore it is easy to see that $|R| \geq |\{ (x,y) \in \Delta^2 | R(x,y) \wedge x \neq s \}| + 2 > |\Delta|$, which is a contradiction.
\end{proof}

\noindent Note that the constraints $|R_{i_1}| = |\Delta|$, $\dots$, $|R_{i_h}| = |\Delta|$ are cardinality constraints which we already know how to deal with.

We are now ready to prove Theorem \ref{prop:function}.

\begin{proof}[Proof of Theorem \ref{prop:function}]
Using Lemma \ref{lemma:lemma1}, we can reduce the problem of computing the WFOMC of $\Gamma \wedge \Upsilon$ to the problem of computing the WFOMC of $\Gamma \wedge \Upsilon' \wedge \Upsilon''$ where $\Upsilon$ contains only cardinality constraints and $\Upsilon'$ has the form $(\forall x \exists y : \psi_1(x,y)) \wedge \dots \wedge (\forall x \exists y : \psi_{m'}(x,y))$. Since $\Gamma \wedge \Upsilon''$ is an FO$^2$ sentence (hence domain-liftable) and $\Upsilon'$ is a conjunction of cardinality constraints, we can use Proposition \ref{prop:cardinality} to finish the proof. 
\end{proof}


Next we provide an illustration of the techniques derived so far. 

\begin{figure}
\centering
\includegraphics[width=0.49\linewidth]{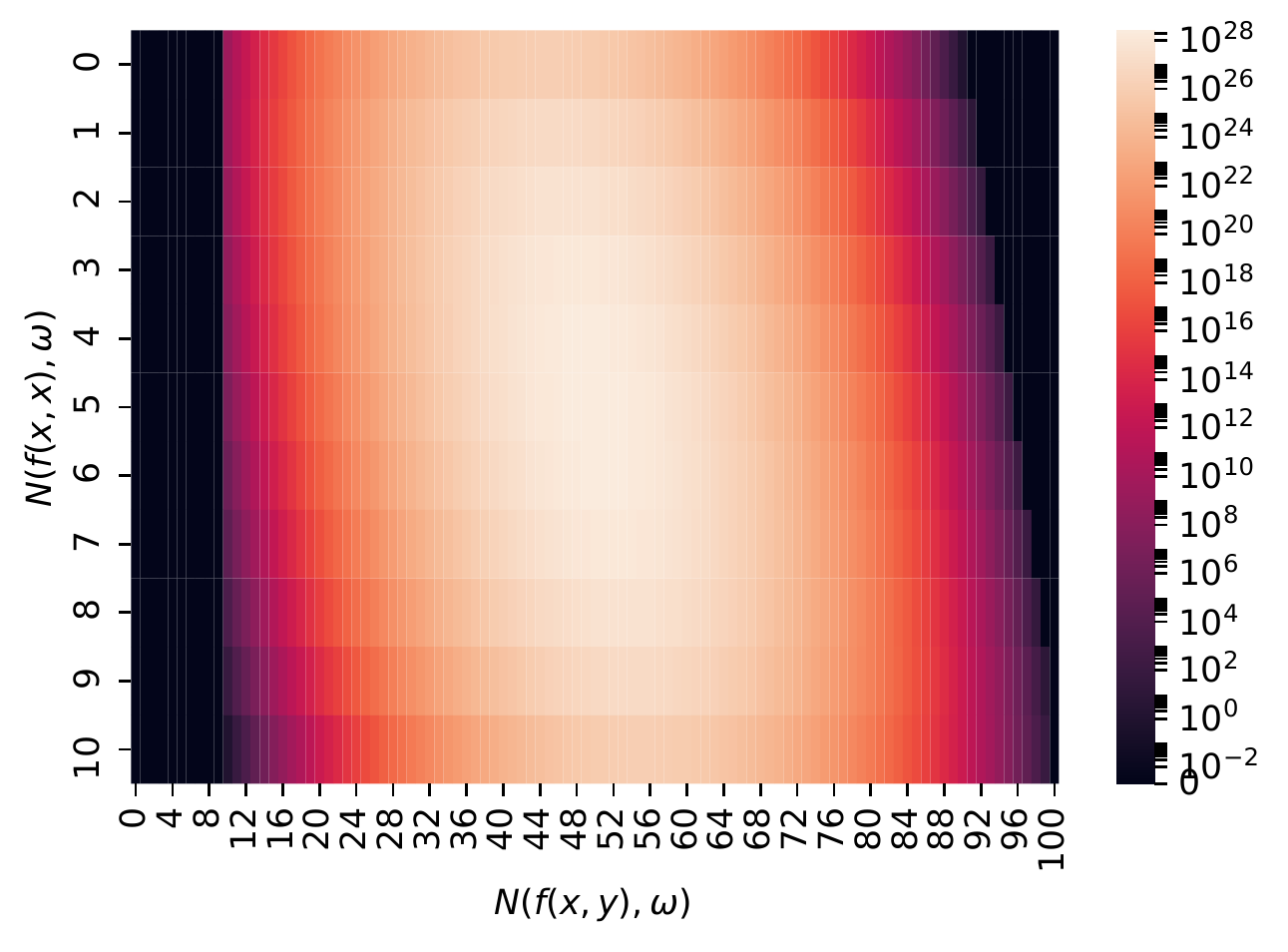}
\includegraphics[width=0.49\linewidth]{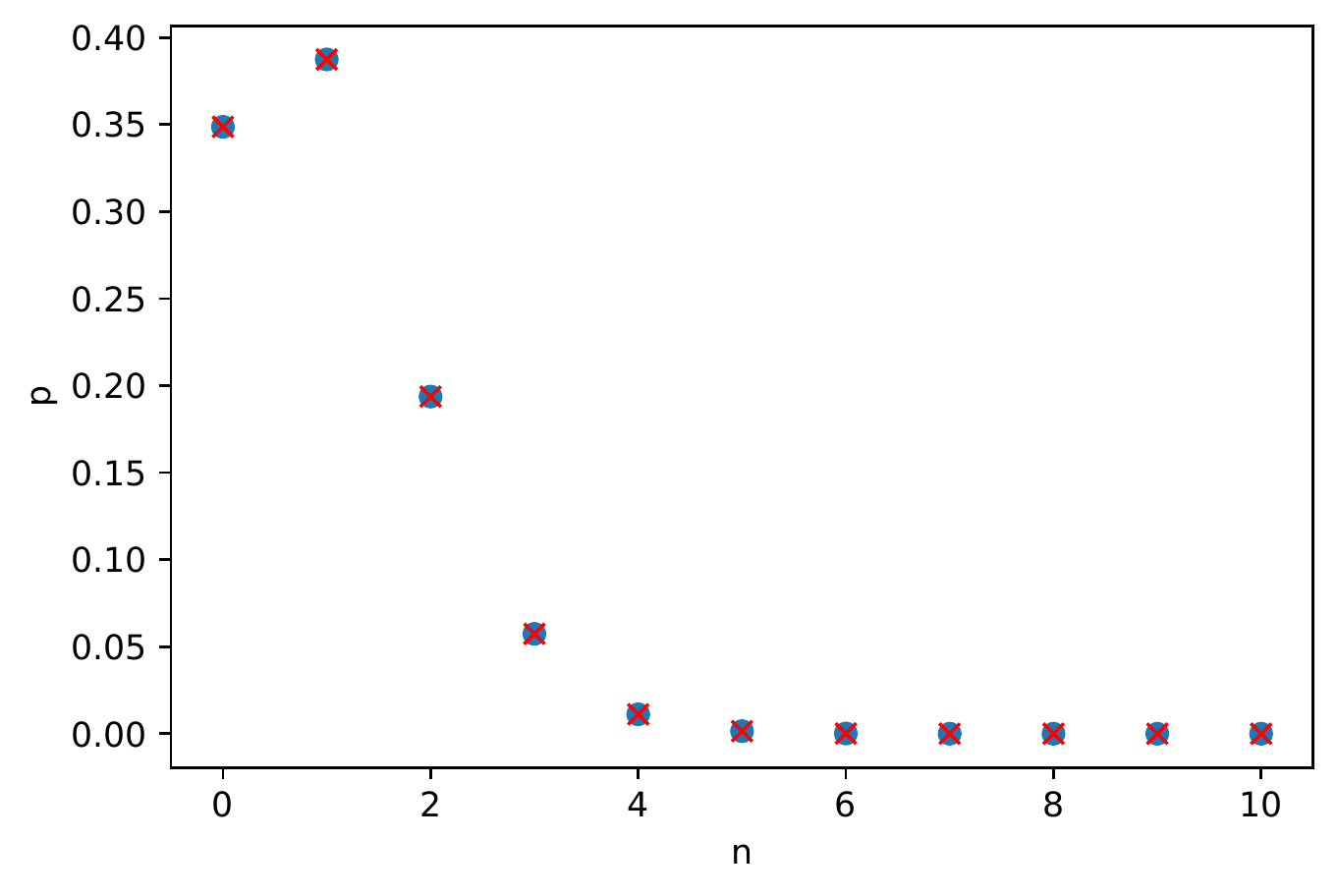}
\caption{{\bf Left:} The MC-function from Example \ref{example:functions}. {\bf Right: } The distribution of the number of fixed points (see Example \ref{example:functions}).}\label{figure}
\end{figure}

\begin{example}\label{example:functions}
How many fixed points does a uniformly sampled function from $\{1,2,\dots,n \}$ to itself have? We can answer this question using WFOMC with functionality constraints. We define 
$$\Gamma =  (\forall x \exists y : f(x,y)) \wedge (\forall x : \xi(x) \Leftrightarrow f(x,x)).$$ 
Here, we introduced a fresh new predicate $\xi$ such that, for all $t \in \Delta$, $\xi(t)$ is true if and only if $t$ is a fixed point of $f$. Next we define 
$$\Psi = \{f,\xi \}.$$ 
Then, using the techniques described in Section \ref{sec:modelcountingfunction}, we compute the MC-function $\operatorname{MC}_{\Psi,\Gamma,\Delta}$, which is shown for $n = 10$ in the left panel of Figure \ref{figure}. 
Now we can extract the distribution that we wanted to compute from this MC-function by ignoring all points except those where $|f| = |\Delta|$. That is, the probability $p_k$ that a uniformly sampled function has $k$ fixed points is equal to: 
$$p_k = \frac{\operatorname{MC}_{\Psi,\Gamma,\Delta}(|\Delta|,k)}{\sum_{j = 1}^{|\Delta|} \operatorname{MC}_{\Psi,\Gamma,\Delta}(|\Delta|,j)}.$$ 
We show the computed distribution in the right panel of Figure \ref{figure} (blue circles). As a sanity check, we also computed the distribution analytically using the formula $\binom{n}{k} (n-1) ^{n-k}/n^n$ and displayed it in the same plot (red crosses). As expected, the values computed using the two approaches are the same.
\end{example}

We give a slightly more complex example, computing the number of anti-involutive functions, in Appendix~\ref{appendix:anti-involutive}.




\section{The $\exists_{=k}$-Quantifier}\label{sec:exists-k-quantifier-section}

In this section we show how the techniques used for WFOMC with functionality constraints, described in the previous section, can be further generalized to more complex settings. In particular, we show how to use them to compute WFOMC with the counting quantifier $\exists_{=k}$.

\subsection{Two Types of Constraints: $\forall x \exists_{=k} y$ and $\exists_{=k} x \forall y$}\label{sec:exists-constraints}

We start by showing how to extend domain-liftability to FO$^2$ with constraints of the form $\forall x \exists_{=k} y : R(x,y)$ and $\exists_{=k} x \forall y : R(x,y)$, where $\exists_{=k}$ is the counting quantifier ``exists exactly k''. 

\begin{theorem}\label{prop:existsk}
Let $\Gamma$ be an FO$^2$ sentence and 
\begin{multline*}
    \Upsilon = (|R_{i_1}| \bowtie c_1) \wedge \dots \wedge (|R_{i_m}| \bowtie c_m) \\
    \wedge (\forall x \exists_{=k_1} y : \psi_1(x,y)) \wedge \dots \wedge (\forall x \exists_{=k_{m'}} y : \psi_{m'}(x,y)) \\
    \wedge (\exists_{=k_1'} x \forall y : \psi_1(x,y)) \wedge \dots \wedge (\exists_{=k_{m''}'} x \forall y : \psi_{m'}(x,y)), 
\end{multline*}
where $\bowtie \in \{=,\leq,\geq,<,> \}$.
Then computing the WFOMC of $\Gamma \wedge \Upsilon$ is domain-liftable.
\end{theorem}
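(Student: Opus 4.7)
The plan is to reduce the problem to FO$^2$ with cardinality and functionality constraints so that Theorem~\ref{prop:function} applies. As a preliminary step, for every counting-quantified subformula I would introduce a fresh predicate $\xi_i$ (respectively $\xi_j'$) of unit weight, $w(\xi_i)=\overline{w}(\xi_i)=1$, and add the defining axiom $\forall x \forall y : \xi_i(x,y) \Leftrightarrow \psi_i(x,y)$. This leaves the WFOMC unchanged and lets me assume each counting constraint has the atomic form $\forall x \exists_{=k_i} y : R_i(x,y)$ or $\exists_{=k_j'} x \forall y : R_j'(x,y)$.

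For a constraint of the form $\exists_{=k'} x \forall y : R(x,y)$, I would introduce a fresh unary predicate $U$ of unit weight and add the FO$^2$ axioms
\begin{equation*}
   (\forall x \forall y : U(x) \Rightarrow R(x,y)) \; \wedge \; (\forall x \exists y : U(x) \vee \neg R(x,y))
\end{equation*}
together with the cardinality constraint $|U|=k'$. The first conjunct forces $U(x) \Rightarrow \forall y\, R(x,y)$, and the second, which is equivalent to $\forall x\,(\neg U(x) \Rightarrow \exists y\, \neg R(x,y))$, supplies the converse. Hence $U(x)$ holds exactly when $\forall y : R(x,y)$ holds, so $|U|=k'$ is exactly the counting condition; both axioms lie in FO$^2$.

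For a constraint of the form $\forall x \exists_{=k} y : R(x,y)$, I would generalize the pigeonhole argument from Lemma~\ref{lemma:lemma1}. Introduce $k$ fresh binary predicates $f_1,\dots,f_k$ of unit weight, impose the functionality constraint $\forall x \exists_{=1} y : f_i(x,y)$ on each, require $\forall x \forall y : f_i(x,y) \Rightarrow R(x,y)$ and pairwise disjointness $\forall x \forall y : \neg(f_i(x,y) \wedge f_j(x,y))$ for $i\neq j$, and add the cardinality constraint $|R|=k|\Delta|$. The $k|\Delta|$ atoms contributed by the $f_i$'s are then all distinct and all lie in $R$, so together with the cardinality bound they exhaust $R$, forcing each $x$ to have exactly $k$ images. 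Conversely, every model of the original constraint is produced by exactly $(k!)^{|\Delta|}$ choices of the $f_i$'s, corresponding to orderings of the $k$ witnesses at each $x$; since the $f_i$'s carry unit weight these duplicates all have equal weight, so the encoded WFOMC equals $(k!)^{|\Delta|}$ times the desired one and I would correct by a final division.

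After all these substitutions, $\Gamma \wedge \Upsilon$ becomes an FO$^2$ sentence conjoined with cardinality constraints and functionality constraints on the $f_i$'s, whose WFOMC is polynomial-time computable by Theorem~\ref{prop:function}; multiplying by the correction factor $\prod_i (k_i!)^{-|\Delta|}$ preserves the polynomial bound since the factor admits a polynomial-size representation. The main obstacle I expect is in the $\forall x \exists_{=k} y$ case: one must carefully verify both completeness (every model of the original constraint lifts to $(k!)^{|\Delta|}$ models of the encoded theory) and soundness (every model of the encoded theory projects to a model of the original), and check that the auxiliary predicates contribute multiplicative weight exactly $1$ so that the overcount is precisely $(k!)^{|\Delta|}$.
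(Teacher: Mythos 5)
Your proposal is correct and follows essentially the same route as the paper: the $\exists_{=k'}x\forall y$ constraints are absorbed into a unary predicate $U$ with a cardinality constraint $|U|=k'$ (the paper's Lemma~\ref{lemma:lemma3}), and each $\forall x\exists_{=k}y$ constraint is split into $k$ disjoint function-like predicates covering $R$ with $|R|=k|\Delta|$ and a correction factor $(k!)^{|\Delta|}$ (the paper's Lemma~\ref{lemma:lemma2}). The only cosmetic difference is that you impose $\forall x\exists_{=1}y:f_i(x,y)$ explicitly and invoke Theorem~\ref{prop:function}, whereas the paper uses only $\forall x\exists y:f_i^R(x,y)$ and lets the cardinality constraint force functionality via the pigeonhole argument, reducing directly to Proposition~\ref{prop:cardinality}; both are sound.
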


To prove Theorem \ref{prop:existsk}, we will need the following two lemmas.

\begin{lemma}\label{lemma:lemma2}
Let $\Omega$ be the set of all possible worlds on a domain $\Delta$. Let $\Gamma$ be a first-order logic sentence. Let
$\Phi$ be a first-order logic sentence with cardinality constraints, defined as follows:
\begin{multline*}
\Phi = (|R| = k \cdot |\Delta|) \wedge (\forall x,y : R(x,y) \Leftrightarrow (f_1^R(x,y) \vee f_2^R(x,y) \vee \dots \vee f_k^R(x,y))) \\
\wedge \bigwedge_{i=1}^k (\forall x \exists y : f_i^R(x,y)) \wedge \bigwedge_{i,j=1, i \neq j}^k (\forall x,y : \neg f_i^R(x,y) \vee \neg f_j^R(x,y)).
\end{multline*}
Then it holds:
$$\wfomc(\Gamma \wedge \forall x \exists_{=k} y : R(x,y), w, \overline{w}, \Omega) =  \frac{1}{(k!)^{|\Delta|}}\wfomc(\Gamma \wedge \Phi, w, \overline{w}, \Omega_{\textit{ext}}),$$
where $\Omega$ is the set of all possible worlds on the domain $\Delta$ w.r.t.\ a given first-order logic language $\mathcal{L}$ and $\Omega_{\textit{ext}}$ is the set of all possible worlds on the domain $\Delta$ w.r.t.\ $\mathcal{L}$ extended by predicates $f_1^{R}$, $\dots$, $f_k^R$.
\end{lemma}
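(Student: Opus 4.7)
My plan is to establish a weight-preserving $(k!)^{|\Delta|}$-to-$1$ correspondence between the models counted on the right-hand side and those counted on the left-hand side; the factor $1/(k!)^{|\Delta|}$ then falls out immediately.

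The first step, and the main technical one, is to verify that in any model $\omega' \models \Phi$, each $f_i^R$ is forced to behave as a total function from $\Delta$ to $\Delta$. The disjointness and equivalence clauses together give $|R| = \sum_{i=1}^k |f_i^R|$; the existence clauses $\forall x \exists y : f_i^R(x,y)$ give $|f_i^R| \geq |\Delta|$ for each $i$; and the cardinality constraint $|R| = k|\Delta|$ then forces $|f_i^R| = |\Delta|$ for every $i$. A pigeon-hole argument identical to the one used in the proof of Lemma \ref{lemma:lemma1} shows that each $f_i^R$ must in fact be single-valued at every $x$. Combined with the disjointness clauses, this implies that for each $x$ the set $\{y : R(x,y) \in \omega'\}$ has exactly $k$ elements, so the restriction of $\omega'$ to the original language $\mathcal{L}$ satisfies $\forall x \exists_{=k} y : R(x,y)$.

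Next I would describe the correspondence explicitly. Given $\omega \models \Gamma \wedge \forall x \exists_{=k} y : R(x,y)$ on $\Omega$, for each $x \in \Delta$ the $R$-witnesses form a $k$-element set $\{y^x_1,\dots,y^x_k\}$, and each of the $(k!)^{|\Delta|}$ assignments of these witnesses to the labels $f_1^R,\dots,f_k^R$ (one permutation chosen independently for each $x$) yields a distinct extension $\omega' \in \Omega_{\textit{ext}}$ of $\omega$ satisfying $\Gamma \wedge \Phi$. Conversely, by the first step every model of $\Gamma \wedge \Phi$ on $\Omega_{\textit{ext}}$ arises in exactly this way from its restriction to $\mathcal{L}$. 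Since $w$ and $\overline{w}$ assign the default values $w(f_i^R) = \overline{w}(f_i^R) = 1$ to the new predicates, each extension $\omega'$ carries exactly the same weight as the underlying $\omega$. Summing over all $\omega$ and dividing by $(k!)^{|\Delta|}$ gives the claimed identity.

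The hard part will be the first step: cleanly arguing that the four conjuncts of $\Phi$ collectively force each $f_i^R$ to be a function, rather than merely constraining cardinalities. Once $|f_i^R| = |\Delta|$ has been established, however, the rest is combinatorial bookkeeping that extends the single-function case of Lemma \ref{lemma:lemma1} to a $k$-tuple of mutually disjoint functions.
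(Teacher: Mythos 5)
Your proposal is correct and follows essentially the same route as the paper's own proof: first arguing via the cardinality decomposition $|R|=\sum_i|f_i^R|$ and the pigeon-hole reasoning of Lemma \ref{lemma:lemma1} that each $f_i^R$ is forced to be a function (hence $\Phi$ entails $\forall x\, \exists_{=k} y : R(x,y)$), and then exhibiting the weight-preserving $(k!)^{|\Delta|}$-to-one correspondence given by permuting the $k$ witnesses of each domain element among the labels $f_1^R,\dots,f_k^R$. If anything, your write-up makes the cardinality accounting ($|f_i^R|\geq|\Delta|$ for each $i$, forced to equality by $|R|=k|\Delta|$) slightly more explicit than the paper does.
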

\begin{proof}
First we show that, for all $\omega \in \Omega_{\textit{ext}}$, it holds: 
if $\omega \models \Gamma \wedge \Phi$ then $\omega \models \Gamma \wedge \forall x \exists_{=k} y : R(x,y)$. 
The sentence $\Phi$ implies that for every $t_1,t_2 \in \Delta$ such that $R(t_1,t_2)$ is true, there is exactly one $i\in \{1,2,\dots,k\}$ such that $f_i^R(t_1,t_2)$ is true. It follows that $|R| = |f_1^R|+\dots+|f_k^R|$. Now, using similar reasoning as in the proof of Lemma \ref{lemma:lemma1}, we can see that $|R| = |f_1^R|+\dots+|f_k^R| = k \cdot |\Delta|$ together with $\bigwedge_{i=1}^k (\forall x \exists y : f_i^R(x,y))$ and $\bigwedge_{i,j=1, i \neq j}^k (\forall x,y : \neg f_i^R(x,y) \vee \neg f_j^R(x,y))$ also implies that all of $f_1^R(x,y)$, $\dots$, $f_k^R(x,y)$ must be functions. 
It follows that $\forall x \exists_{=k} y : R(x,y)$ must be true in any possible world $\omega \in \Omega$ that satisfies $\Phi$.

To finish the proof, let $[\omega]_\mathcal{L}$ denote the ``projection'' of $\omega$ on the language $\mathcal{L}$ which is the possible world obtained from $\omega$ by removing all atoms whose predicates are not contained in $\mathcal{L}$ (i.e.\ $f_1^R$, $\dots$, $f_k^R$). One can show easily that, for every model $\omega \in \Omega$ of the sentence $\Gamma \wedge \forall x \exists_{=k} y : R(x,y)$, there are exactly $(k!)^{|\Delta|}$ models $\omega' \in \Omega_{\textit{ext}}$ such that $\omega = [\omega']_\mathcal{L}$, which follows from the following: (i) if, for any $t \in \Delta$, we permute $t_1$, $t_2$, $\dots$, $t_k$ in $f_1^R(t,t_1)$, $f_2^R(t,t_2)$ $\dots$, $f_k^R(t,t_k)$ in the model $\omega'$, we get another model of $\Gamma \wedge \Phi$, (ii) up to these permutations, the predicates $f_i^k$ in $\omega'$ are determined uniquely by $\omega$. Finally, the weights of all these $\omega'$s are the same as those of $\omega$.
\end{proof}

\begin{lemma}\label{lemma:lemma3}
Let $\Omega$ be the set of all possible worlds on a domain $\Delta$. Let $\Gamma$ be a first-order logic sentence and $U$ and $R$ be predicates. 
Then it holds: 
\begin{multline*}
    \wfomc(\Gamma \wedge \exists_{=k} x \forall y : R(x,y), w, \overline{w}, \Omega) \\ =  \wfomc(\Gamma \wedge (\exists_{=k} x : U^R(x)) 
    \wedge (\forall x : U^R(x) \Leftrightarrow (\forall y : R(x,y))), w, \overline{w}, \Omega_\textit{ext}).
\end{multline*}
where $\Omega$ is the set of all possible worlds on the domain $\Delta$ w.r.t.\ a given first-order logic language $\mathcal{L}$ and $\Omega_{\textit{ext}}$ is the set of all possible worlds on the domain $\Delta$ w.r.t.\ $\mathcal{L}$ extended by the predicate $U^R$ (in particular, we assume w.l.o.g.\ that $\mathcal{L}$ did not originally contain this predicate).
\end{lemma}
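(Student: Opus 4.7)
The plan is to establish a weight-preserving bijection between the models of the left-hand side sentence (in $\Omega$) and the models of the right-hand side sentence (in $\Omega_\textit{ext}$), with the fresh predicate $U^R$ serving as a Tseitin-style abbreviation of the inner quantified formula $\forall y : R(x,y)$. Because $U^R$ does not appear in the original language $\mathcal{L}$, the paper's convention fixes $w(U^R) = \overline{w}(U^R) = 1$, so matched models contribute equal weights and the bijection will yield the desired identity.

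First, I would define the map from left to right: given $\omega \in \Omega$ with $\omega \models \Gamma \wedge \exists_{=k} x \forall y : R(x,y)$, set
\[
\omega' = \omega \,\cup\, \{\, U^R(t) \,:\, t \in \Delta,\ \omega \models \forall y : R(t,y) \,\}.
\]
Since $U^R$ does not occur in $\Gamma$, $\omega' \models \Gamma$ follows from $\omega \models \Gamma$. By construction $\omega' \models \forall x : U^R(x) \Leftrightarrow (\forall y : R(x,y))$, and the cardinality of $U^R$ in $\omega'$ equals the number of $t \in \Delta$ with $\forall y : R(t,y)$ true in $\omega$, which is exactly $k$; hence $\omega' \models \exists_{=k} x : U^R(x)$.

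Second, I would define the inverse: given $\omega' \in \Omega_\textit{ext}$ satisfying the right-hand side, let $\omega = [\omega']_\mathcal{L}$ be obtained by deleting all $U^R$-atoms. The biconditional forces the extension of $U^R$ in $\omega'$ to be precisely $\{t : \omega' \models \forall y : R(t,y)\}$, and since $R$ and $U^R$ are distinct predicates this set equals $\{t : \omega \models \forall y : R(t,y)\}$. Together with $\omega' \models \exists_{=k} x : U^R(x)$, this yields $\omega \models \exists_{=k} x \forall y : R(x,y)$, and $\omega \models \Gamma$ is immediate. The two maps are clearly mutual inverses.

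Finally, I would compare weights. For $\omega$ and the corresponding $\omega'$, the products $\prod_{P \in \mathcal{R}} w(P)^{N(P,\cdot)}\overline{w}(P)^{|\Delta|^{\textit{arity}(P)}-N(P,\cdot)}$ differ only by the factor contributed by $U^R$, which is $w(U^R)^{k}\cdot \overline{w}(U^R)^{|\Delta|-k} = 1$. Summing over the bijection then gives the claimed equality of WFOMCs. No step is really the main obstacle; the only thing to be careful about is the bookkeeping that $U^R$ is fresh (so $\Gamma$ and $R$-atoms are oblivious to it) and that the biconditional together with $\exists_{=k} x : U^R(x)$ is logically equivalent to the nested counting quantifier $\exists_{=k} x \forall y : R(x,y)$ modulo the addition of $U^R$.
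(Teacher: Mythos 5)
Your proof is correct and is exactly the argument the paper has in mind: the paper's own proof of this lemma is just the single line ``The proof is straightforward,'' and your weight-preserving bijection via the Tseitin-style predicate $U^R$ (with $w(U^R)=\overline{w}(U^R)=1$ by the paper's default-weight convention) is the intended justification, fully written out.
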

\begin{proof}
The proof is straightforward.
\end{proof}

We are now ready to prove Theorem \ref{prop:existsk}.

\begin{proof}[Proof of Theorem \ref{prop:existsk}]
First, repeatedly using Lemma \ref{lemma:lemma3}, we can get rid of all constraints of the form $\exists_{=k} x \forall y : R(x,y)$. Besides new first-order logic sentences, this also produces new constraints of the form $\exists_{=k} x : U^R(x)$ which can be easily encoded using cardinality constraints $|U^R| = k$. Finally, we can use Lemma \ref{lemma:lemma2} repeatedly to get rid of the constraints of the form $\forall x \exists_{=k} y : R(x,y)$. Since the resulting sentence contains only two variables and cardinality constraints, it follows from Proposition \ref{prop:cardinality} that we can compute its WFOMC in time polynomial in the size of the domain~$\Delta$.
\end{proof}


\subsubsection{An Illustration: Counting K-Regular Graphs}\label{sec:regular-graphs}

We now illustrate the techniques developed in this section on the problem of computing the number of 2-regular graphs on $n$ vertices. An undirected graph is called {\em k-regular} if all its vertices have degree $k$. Note that we do not count non-isomorphic graphs here.

We start by writing down the axioms defining $2$-regular graphs: 

\begin{align}
    \forall x &: \neg e(x,x), \label{fo:noloops} \\
    \forall x,y &: e(x,y) \Rightarrow e(y,x), \label{fo:symmetry} \\
    \forall x \exists_{=2} y &: e(x,y). \label{fo:degree2}
\end{align}

\noindent Here (\ref{fo:noloops}) forbids self-loops, (\ref{fo:symmetry}) requires $e$ to be a symmetric relation (to model undirected graphs) and (\ref{fo:degree2}) requires every vertex to have two out-going edges. Since, by (\ref{fo:symmetry}), edges are guaranteed to be symmetric, (\ref{fo:degree2}) is enough to guarantee that every vertex will have degree 2.

Since every sentence in the above theory has at most 2 variables and contains only quantifiers $\forall$ and $\exists_{=k}$, we can apply the techniques developed in this section to compute the number of 2-regular graphs using WFOMC. We now provide details. First, we can rewrite (\ref{fo:degree2}) using only $\forall$, $\exists$ and cardinality constraints as in Lemma \ref{lemma:lemma2}:

\begin{align}
    \forall x \forall y &: \xi(x,y) \Leftrightarrow e(x,y),\label{fo:degree2-a} \\
    & |\xi| = 2 |\Delta|, \label{fo:degree2-b} \\
    \forall x \exists y &: f_1(x,y), \label{fo:degree2-c} \\
    \forall x \exists y &: f_2(x,y), \label{fo:degree2-d} \\
    \forall x \forall y &: \xi(x,y) \Leftrightarrow (f_1(x,y) \vee f_2(x,y)), \label{fo:degree2-e} \\
    \forall x \forall y &: \neg f_1(x,y) \vee \neg f_2(x,y). \label{fo:degree2-f} 
\end{align}

We set the weights of all the predicates to $1$ except for $\xi$ for which we set $w(\xi) = w$ and $\overline{w} = 1$. Let $\Gamma$ be the conjunction of (\ref{fo:noloops}), (\ref{fo:symmetry}), (\ref{fo:degree2-a}), (\ref{fo:degree2-c}), (\ref{fo:degree2-d}), (\ref{fo:degree2-e}), (\ref{fo:degree2-f}). Since $\Gamma$ is in FO$^2$, we can use, e.g., the algorithm from \citep{beame2015symmetric} to compute WFOMC for any weight $w$ in time polynomial in the size of the domain and number of bits needed to encode $w$. Thus we can use the techniques described in Section \ref{sec:modelcountingfunction} to compute the MC-function $\operatorname{MC}_{\Psi,\Gamma,\Delta}$, where we set $\Psi = \{ \xi \}$. Finally, we still need to divide the MC-function by $2!^{|\Delta|} = 2^{|\Delta|}$ to account for the over-counting caused by $f_1$ and $f_2$. The number of 2-regular graphs is then equal to $\operatorname{MC}_{\Psi,\Gamma,\Delta}(\mathbf{n})/2^{|\Delta|}$ where $\mathbf{n} = 2|\Delta|$. For 3, 4, 5, 6, 7, 8, 9, 10 vertices this method yields the following numbers of 2-regular graphs:  1, 3, 12, 70, 465, 3507, 30016, 286884. One can check that these numbers are exactly the same as the numbers of 2-regular graphs listed in the {\em On-Line Encyclopedia of Integer Sequences} as sequence A001205.\footnote{http://oeis.org/A001205} We show one example of the MC-function divided by $2^{10}$ in Figure \ref{figure:mc-regular-graphs} for $|\Delta| = 10$. The $x$-coordinate of the red point shown there is $n = 2|\Delta| = 20$ and its $y$-coordinate $\operatorname{MC}_{\Psi,\Gamma,\Delta}/2^{10} = 286884$ is the number of 2-regular graphs on 10 vertices.

%

\begin{figure}
\centering
\includegraphics[width=0.75\linewidth]{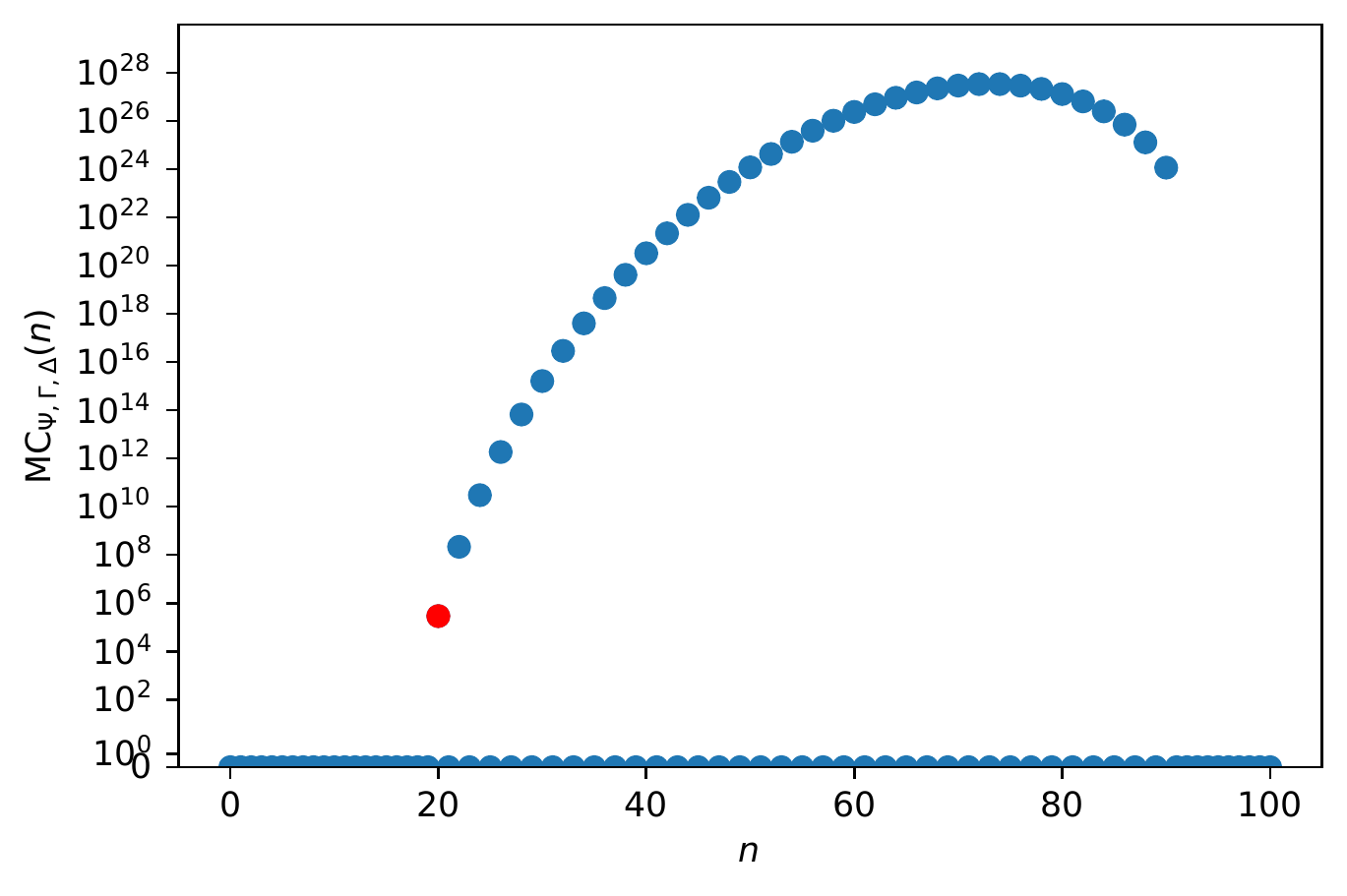}
\caption{The function $\operatorname{MC}_{\Psi,\Gamma,\Delta}(n)/2^{|\Delta|}$, where $\Psi$ and $\Gamma$ are described in the main text in Section \ref{sec:regular-graphs} and $|\Delta| = 10$.}\label{figure:mc-regular-graphs}
\end{figure}


One can easily adapt the above example for counting the number of $k$-regular graphs for a general $k$ (although the complexity of the encoding grows with $k$). We note that counting $k$-regular graphs is, in fact, an interesting, not completely trivial, problem; for $k = 2,3,4,5$ we refer to \citep{goulden1986labelled} for more details.

\subsection{The General Case}\label{sec:exists-k-general-case}

In this section we use the results from the previous sections to show that computing WFOMC of arbitrary two-variable first-order logic sentences with the quantifiers $\exists_{=k}$ is domain-liftable.

\begin{theorem}\label{thm:general-existsk}
Let $\Gamma$ be a sentence in the two-variable fragment of first-order logic, possibly containing a finite number of counting quantifiers $\exists_{=k_1}$, $\exists_{=k_2}$, $\dots$, $\exists_{=k_m}$. Then computing $\wfomc(\Gamma, w, \overline{w}, \Delta)$ is domain-liftable.
\end{theorem}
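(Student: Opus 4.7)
My plan is to reduce an arbitrary FO$^2$ sentence with counting quantifiers $\exists_{=k_i}$ to the restricted shape already covered by Theorem~\ref{prop:existsk} -- an FO$^2$ sentence conjoined with cardinality constraints and top-level constraints of the form $\forall x\,\exists_{=k}\,y:\xi(x,y)$ -- and then invoke that theorem. The reduction processes the counting-quantifier subformulas of $\Gamma$ one at a time, starting from the innermost.

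For an innermost counting subformula $\phi(x)=\exists_{=k}\,y:\psi(x,y)$ (so $\psi$ is quantifier-free, since any FO$^2$ formula with both variables free has no quantifiers), I introduce fresh predicates $R(x,y)$ and $U(x)$, add the FO$^2$ Tseitin axioms $\forall x\,y:R(x,y)\leftrightarrow\psi(x,y)$ and $\forall x:U(x)\leftrightarrow \exists_{=k}\,y:R(x,y)$, and substitute $U(x)$ for $\phi(x)$ in $\Gamma$. Because $U$ is then functionally determined by $R$ via the biconditional, the WFOMC is preserved (taking weight $1$ on the fresh predicates).

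The crux is showing that the biconditional $\forall x:U(x)\leftrightarrow \exists_{=k}\,y:R(x,y)$ itself can be replaced by constraints of the kind handled by Theorem~\ref{prop:existsk}. Writing $B_x$ for $\exists_{=k}\,y:R(x,y)$, I exploit the Boolean identity
\begin{equation*}
\prod_x \mathbbm{1}[U(x)\leftrightarrow B_x]=\prod_{x\in U}\mathbbm{1}[B_x]\cdot\prod_{x\notin U}(1-\mathbbm{1}[B_x])=\sum_{T_0\subseteq\Delta\setminus U}(-1)^{|T_0|}\,\mathbbm{1}\!\bigl[\forall x\!\in\!U\cup T_0:B_x\bigr].
\end{equation*}
To realise this alternating sum inside WFOMC, in the spirit of Proposition~\ref{prop:removing-negation}, I introduce a fresh unary predicate $T_0$ with weights $w(T_0)=-1$, $\overline{w}(T_0)=1$ and the FO$^2$ axiom $\forall x:T_0(x)\to\neg U(x)$, so summation over admissible $T_0$'s contributes the sign $(-1)^{|T_0|}$. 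To turn "for every $x\in U\cup T_0$, the count of $R(x,\cdot)$ equals $k$" into a single top-level $\forall x\,\exists_{=k}\,y$ constraint, I apply the dummy-witness trick: introduce a fresh unary $T'$ with cardinality constraint $|T'|=k$ and a fresh binary $\xi$ defined by the FO$^2$ axiom
\begin{equation*}
\forall x\,y:\xi(x,y)\leftrightarrow\bigl(R(x,y)\wedge(U(x)\vee T_0(x))\bigr)\vee\bigl(\neg(U(x)\vee T_0(x))\wedge T'(y)\bigr).
\end{equation*}
Then $\forall x\,\exists_{=k}\,y:\xi(x,y)$ is equivalent to the target condition: for $x\notin U\cup T_0$, $\xi(x,y)$ reduces to $T'(y)$, so the count is automatically $|T'|=k$; for $x\in U\cup T_0$, it forces the count of $R(x,\cdot)$ to equal $k$. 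The $\binom{|\Delta|}{k}$ symmetric choices of $T'$ produce a polynomial-size overcounting factor that I divide out.

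Iterating this construction across all biconditionals yields a sentence $\Gamma^\star$ whose only counting constructs are cardinality constraints and top-level $\forall x\,\exists_{=k_i}\,y:\xi_i(x,y)$ constraints, with rational weights of polynomial bit-size; Theorem~\ref{prop:existsk} makes its WFOMC domain-liftable, and multiplying by the collected $\prod_i\binom{|\Delta|}{k_i}^{-1}$ (computable in polynomial time) recovers $\wfomc(\Gamma,w,\overline{w},\Delta)$. The chief obstacle I anticipate is the "wrong-direction" implication $\forall x:\neg U(x)\to\neg\exists_{=k}\,y:R(x,y)$ hidden inside the biconditional, since $\neg\exists_{=k}$ ("count $\neq k$") cannot be directly written as a $\forall x\,\exists_{=k'}\,y$ obligation; the inclusion-exclusion identity paired with the negative-weight predicate $T_0$ is precisely what converts this "count $\neq k$" obligation into a signed combination of "count $=k$" obligations that Theorem~\ref{prop:existsk} already handles.
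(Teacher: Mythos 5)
Your proof is correct and it reaches the same target as the paper's --- a sentence of the shape covered by Theorem~\ref{prop:existsk} --- but it gets there by a genuinely different mechanism for the hard step, the biconditional $\forall x : U(x) \Leftrightarrow (\exists_{=k} y : R(x,y))$. The paper splits the biconditional into its two implications: the forward one becomes a disjunction $\forall x : B(x) \vee (\exists_{=k} y : R(x,y))$ and is handled by Lemma~\ref{lemma:lemma4}, while the backward one, $\forall x : A(x) \vee \neg(\exists_{=k} y : R(x,y))$, is first stripped of its negation by the relaxed Tseitin transform of Proposition~\ref{prop:removing-negation} (a local three-clause gadget with one weight-$(-1)$ predicate) and then also fed to Lemma~\ref{lemma:lemma4}. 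You instead expand the product of per-element biconditional indicators by inclusion--exclusion and realize the alternating sum with a single global weight-$(-1)$ unary predicate $T_0$; this collapses both directions into one guarded obligation ``every $x \in U \cup T_0$ has exactly $k$ $R$-successors,'' and your $\xi$/$T'$ construction with the $\binom{|\Delta|}{k}$ correction is precisely the dummy-witness device of Lemma~\ref{lemma:lemma4}, rederived. Both routes use a negative weight to simulate the ``count $\neq k$'' obligation, but yours needs only one auxiliary $\forall x \exists_{=k} y$ constraint per biconditional rather than two, which is arguably slightly more economical; I verified that your telescoping of the $T_0$- and $T'$-sums gives exactly the claimed factor $\binom{|\Delta|}{k}$ per eliminated biconditional. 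Two minor points: your parenthetical claim that an innermost counting subformula has a quantifier-free matrix is not quite right in FO$^2$ (variables may be requantified, as in $R(x,y) \wedge \exists x : S(x,y)$), but this is harmless since all that is needed is that $\psi$ contains no counting quantifiers so the Tseitin axiom stays in FO$^2$; and you should state explicitly that counting quantifiers binding $x$, and those whose scope leaves no free variable (which reduce to plain cardinality constraints), are handled symmetrically.
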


We already know from the previous sections how to do inference with special types of constraints, in particular with cardinality constraints and constraints of the form $\forall x \exists_{=k} y$ and $\exists_{=k} x \forall y$. The main difficulty in extending these results is to be able to encode sentences of the form $\forall x : A(x) \Leftrightarrow (\exists_{=k} R(x,y))$. We show how to do it in two steps. In Lemma \ref{lemma:lemma4}, we show how to encode sentences of the form $\forall x : A(x) \vee (\exists_{=k} R(x,y))$ using the form of constraints that we already know how to handle. We then use this intermediate result, together with Proposition \ref{prop:removing-negation} (Appendix A.2 in \citep{beame2015symmetric}), to encode the sentence  $\forall x : A(x) \Leftrightarrow (\exists_{=k} R(x,y))$ when we prove Theorem \ref{thm:general-existsk}.

\begin{lemma}\label{lemma:lemma4}
Let $R$ be a relation, $\Gamma$ be a first-order logic sentence and $\Upsilon$ be a conjunction of $\forall x \exists_{=k} y$ and $\exists_{=k} x \forall y$-constraints and cardinality constraints. Let us define $\Phi = \Phi_1 \wedge \Phi_2 \wedge \Phi_3 \wedge \Phi_4$ where: 
\begin{align*}
    \Phi_1 =& \forall x \exists_{=k} y : B^R(x,y),\\
    \Phi_2 =& (|U^R| = k), \\
    \Phi_3 =& \forall x \forall y : (A(x) \wedge B^R(x,y)) \Rightarrow U^R(y), \\
    \Phi_4 =& \forall x \forall y : \neg A(x) \Rightarrow (R(x,y) \Leftrightarrow B^R(x,y)).
\end{align*}
Then the following holds for WFOMC:
$$\wfomc(\Gamma \wedge \Upsilon \wedge (\forall x : A(x) \vee (\exists_{=k} y : R(x,y))), w,\overline{w}, \Delta) = \frac{1}{\binom{|\Delta|}{k}} \wfomc(\Gamma \wedge \Upsilon \wedge \Phi, w, \overline{w},\Omega_\textit{ext}),$$
where $\Omega$ is the set of all possible worlds on the domain $\Delta$ w.r.t.\ a given first-order logic language $\mathcal{L}$ and $\Omega_{\textit{ext}}$ is the set of all possible worlds on the domain $\Delta$ w.r.t.\ $\mathcal{L}$ extended by predicates $U^R$ and $B^R$, $\dots$, $f_k^R$ (in particular, we assume w.l.o.g.\ that $\mathcal{L}$ did not originally contain these predicates).
\end{lemma}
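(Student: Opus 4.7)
The plan is to establish a weight-preserving, many-to-one correspondence between models of the right-hand side in $\Omega_{\textit{ext}}$ and models of the left-hand side in $\Omega$, with the fibre over each left-hand-side model having exactly $\binom{|\Delta|}{k}$ elements. Since the auxiliary predicates $B^R$ and $U^R$ are new and therefore carry the default weights $w = \overline{w} = 1$, the weight of any extension $\omega'$ equals that of its projection onto $\mathcal{L}$, so the factor $\binom{|\Delta|}{k}$ will come purely from counting extensions.

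The easy direction is to show that every $\omega' \models \Gamma \wedge \Upsilon \wedge \Phi$ projects to a model of $\Gamma \wedge \Upsilon \wedge (\forall x : A(x) \vee (\exists_{=k} y : R(x,y)))$. The only non-trivial obligation is the disjunctive constraint: for any $x$ with $\omega' \not\models A(x)$, $\Phi_4$ forces $R(x,\cdot) = B^R(x,\cdot)$, and $\Phi_1$ forces $|B^R(x,\cdot)| = k$, hence $\exists_{=k} y : R(x,y)$ holds at such $x$. For the converse, I would fix an $\omega \models \Gamma \wedge \Upsilon \wedge (\forall x : A(x) \vee (\exists_{=k} y : R(x,y)))$ and enumerate its extensions satisfying $\Phi$. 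The core claim is that each such extension is uniquely determined by the choice of a $k$-element subset realizing $U^R$, and every such choice yields a valid extension. For any $U \subseteq \Delta$ with $|U| = k$, I set $U^R := U$ and
$$
B^R(x,\cdot) := \begin{cases} U & \text{if } A(x) \text{ holds in } \omega, \\ R(x,\cdot) & \text{otherwise.} \end{cases}
$$
Then $\Phi_2$, $\Phi_3$, $\Phi_4$ are immediate from this definition, and $\Phi_1$ holds because in the first branch $|U| = k$ by construction and in the second branch $|R(x,\cdot)| = k$ by the left-hand-side constraint applied to $x$ with $\neg A(x)$.

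The one step that genuinely carries the argument is the uniqueness of $B^R$ given $U^R$, which rests on a pigeonhole observation: if $A(x)$ holds then $\Phi_3$ gives $B^R(x,\cdot) \subseteq U^R$ and $\Phi_1$ gives $|B^R(x,\cdot)| = k = |U^R|$, forcing $B^R(x,\cdot) = U^R$; while if $\neg A(x)$, $\Phi_4$ pins $B^R(x,\cdot) = R(x,\cdot)$. This is what makes the fibre size exactly $\binom{|\Delta|}{k}$ rather than something larger depending on $\omega$. Once the fibres are shown to be weight-preserving and of uniform size $\binom{|\Delta|}{k}$, summing over left-hand-side models yields the claimed identity. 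I expect no serious obstacle beyond bookkeeping; the delicate point to double-check is only that the two branches in the definition of $B^R$ never conflict (they are disjoint because $A$ is a unary predicate) and that the extension exists for every $U$ (which reduces to the $k$-cardinality of $R(x,\cdot)$ guaranteed by the LHS constraint).
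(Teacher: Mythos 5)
Your proposal is correct and follows essentially the same route as the paper's proof: the easy projection direction via $\Phi_1$ and $\Phi_4$, and the fibre-counting argument in which each choice of a $k$-subset for $U^R$ uniquely forces $B^R$ (equal to $U^R$ on the $A$-part by the $\Phi_1$/$\Phi_2$/$\Phi_3$ pigeonhole, and equal to $R$ on the $\neg A$-part by $\Phi_4$), giving exactly $\binom{|\Delta|}{k}$ weight-preserving extensions. Your write-up is in fact slightly more careful than the paper's in explicitly verifying that every choice of $U$ yields a valid extension, not just that extensions are determined by $U$.
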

\begin{proof}
First, we show that for every possible world $\omega \in \Omega_\textit{ext}$ it holds: if $\omega \models \Phi$ then $\omega \models (\forall x : A(x) \vee (\exists_{=k} y : R(x,y)))$. For contradiction, let us assume that $\omega^* \in \Omega_\textit{ext}$ is a possible world such that $\omega^* \models \Phi$ and $\omega^* \not\models (\forall x : A(x) \vee (\exists_{=k} y : R(x,y)))$. Then there must be a $t \in \Delta$ such that $\omega^* \models \neg A(t) \wedge \neg (\exists_{=k} y : R(t, y))$. At the same time, it must be the case that $\omega^* \models \exists_{=k} y : B^R(t,y)$ (from $\Phi_1$) and $\omega^* \models \forall y : R(t,y) \Leftrightarrow B^R(t,y)$ (from $\Phi_4$). However, these cannot be all true at the same time, thus, we have arrived at a contradiction.

To finish the proof, let $[\omega]_\mathcal{L}$ denote the ``projection'' of $\omega$ on the language $\mathcal{L}$ which is the possible world obtained from $\omega$ by removing all atoms whose predicates are not contained in $\mathcal{L}$ (i.e.\ $U^R$ and $B^R$). 
We show that for every $\omega \in \Omega$ such that $\omega \models (\forall x : A(x) \vee (\exists_{=k} y : R(x,y)))$ there are exactly $\binom{|\Delta|}{k}$ possible worlds $\omega' \in \Omega_\textit{ext}$ such that $\omega' \models \Phi$ and $\omega = [\omega']_\mathcal{L}$. Let $\omega \in \Omega$ be any such possible world. Due to $\Phi_2$, to extend $\omega$, we have to select a set $\{ t_1, t_2, \dots, t_k \}$ of elements of the domain $\Delta$ and make $U(t_1)$, $U(t_2)$, $\dots$, $U(t_k)$ true (and no other). This can be done in $\binom{|\Delta|}{k}$ different ways. Once, we set the $U^R$ predicate in this way, there is only one way to extend the possible world $\omega$: For every $t \in \Delta$ such that $\omega' \models A(t)$, it must be true $\omega' \models B(t,t_1) \wedge \dots \wedge B^R(t,t_k)$. This is because, due to $\Phi_1$, for every $t \in \Delta$ there must be exactly $k$ domain elements $t_1'$, $\dots$, $t_k'$ such that $\omega' \models B^R(t,t_1') \wedge \dots \wedge B^R(t,t_k')$. Moreover, due to $\Phi_4$, if $\omega' \models A(t) \wedge B^R(t,t_i')$ are true then $\omega' \models U^R(t_i')$ must be true as well. However, there are only $k$ such domain elements $t_i'$ (due to $\Phi_2$). This means that $B^R(t,t_i')$ is uniquely determined. Moreover, for all the other $t \in \Delta$ such that $\omega' \models \neg A(t)$, $B^R$ must coincide with $R$ and hence is uniquely determined as well. This is what we needed to finish the proof.
\end{proof}

\noindent Now we are ready to prove Theorem \ref{thm:general-existsk}.

\begin{proof}[Proof of Theorem \ref{thm:general-existsk}]
We start by showing how to deal with sentences of the form  $\Gamma \wedge \Upsilon$ where 
$$\Upsilon = \forall x : A(x) \Leftrightarrow (\exists_{=k} y : R(x,y)).$$
First we rewrite the sentence 
$$\Upsilon_1 = \forall x : A(x) \Rightarrow (\exists_{=k} y : R(x,y))$$
into a form that we already know how to handle. 
For that we introduce a new unary predicate $B$ and define 
$$\Upsilon_1' = (\forall x : A(x) \Leftrightarrow \neg B(x)) \wedge (\forall x : B(x) \vee (\exists_{=k} y : R(x,y))).$$ 
Sentences in this form can already be handled by Lemma \ref{lemma:lemma4}. Let $\Omega_\textit{ext}'$ be the set of all possible worlds on domain $\Delta$ over the given language extended by the predicate $B$. Then for every possible world $\omega \in \Omega$ which is a model of $\Gamma \wedge \Upsilon_1$ there is exactly one possible world in $\Omega_\textit{ext}'$ which is a model of $\Gamma \wedge \Upsilon_1'$, and vice versa. 


Next we need to show how to handle the sentence $\Upsilon_2 = \forall x : A(x) \Leftarrow (\exists_{=k} y : R(x,y))$. At first this may seem difficult but Proposition \ref{prop:removing-negation} comes to the rescue here. Specifically, if we equivalently write  
$$\Upsilon_2 = \forall x : A(x) \vee \neg (\exists_{=k} y : R(x,y)),$$ 
we can get rid of the negation in front of $(\exists_{=k} y : R(x,y))$ using Proposition \ref{prop:removing-negation} as follows. We create two new unary predicates $C$ and $D$ and define  
$$\Upsilon_2' = \forall x : ((\exists_{=k} y : R(x,y)) \vee C(x)) \wedge (C(x) \vee D(x)) \wedge ((\exists_{=k} y : R(x,y)) \vee D(x))).$$ It follows from Proposition \ref{prop:removing-negation} that if we set $w(C) = \overline{w}(C) = w(D) = 1$ and $\overline{w}(D) = -1$ then for any sentence $\Theta$ it will hold
$$\wfomc(\Theta \wedge \Upsilon_2, w, \overline{w}, \Omega) = \wfomc(\Theta \wedge (\forall x : A(x) \vee C(x)) \wedge \Upsilon_2', w, \overline{w}, \Omega_\textit{ext}''),$$
where $\Omega$ is the set of all possible worlds on the domain $\Delta$ w.r.t.\ the given first-order logic language $\mathcal{L}$ and $\Omega_{\textit{ext}}''$ is the set of all possible worlds on the domain $\Delta$ w.r.t.\ $\mathcal{L}$ extended by the predicates $C$ and $D$.

Next we can apply consecutively the two steps described above which gives us 
$$\wfomc(\Gamma \wedge \Upsilon, w, \overline{w}, \Omega) = \wfomc(\Gamma \wedge (\forall x : A(x) \vee C(x)) \wedge \Upsilon_1' \wedge \Upsilon_2', w, \overline{w}, \Omega_\textit{ext}'''),$$
where $\Omega$ is the set of all possible worlds on the domain $\Delta$ w.r.t.\ a given first-order logic language $\mathcal{L}$ and $\Omega_{\textit{ext}}'''$ is the set of all possible worlds on the domain $\Delta$ w.r.t.\ $\mathcal{L}$ extended by the predicates $B$, $C$ and $D$. From Lemma \ref{lemma:lemma4}, we already know how to handle the sentences $\Upsilon_1'$ and $\Upsilon_2'$ when computing WFOMC.

Finally, we use the above to compute WFOMC of arbitrary FO$^2$ sentences which may contain quantifiers $\exists_{=k}$. This is relatively straightforward. Let $\Gamma$ be such a sentence. We just need to repeatedly replace every subformula of the form $\exists_{=k} y : \psi(x,y)$ by $A_\psi(x)$, where $A_\psi$ is a fresh unary predicate, and add a ``definition'' of this predicate in the form $(\forall x \forall y : B_\psi(x,y) \Leftrightarrow \psi(x,y)) \wedge (A_\psi(x) \Leftrightarrow (\exists_{=k} y : B_\psi(x,y)))$. It is not difficult to see that the resulting sentence (i) is in a form that can be handled by Theorem \ref{prop:existsk}, (ii) that it contains only two variables and (iii) that its size is independent of the size of the domain. Additionally, the only way in which the resulting sentence will depend on the domain is through the cardinality constraints of the form $|R| = k |\Delta|$ and, in any such a constraint, $k$ will grow only polynomially with the size of the domain (this is no problem for establishing domain liftability). The statement of the theorem then follows from the above and from Theorem \ref{prop:existsk}.
\end{proof}


\section{The Two-Variable Fragment with Counting}\label{sec:main-result}

In this section we show that the results from the previous sections imply domain liftability for the two-variable fragment of first-order logic with counting quantifiers.

\begin{theorem}\label{thm:main-theorem}
Weighted first-order model counting is domain-liftable for the two-variable fragment of first-order logic with counting quantifiers.
\end{theorem}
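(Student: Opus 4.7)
The plan is to reduce Theorem \ref{thm:main-theorem} to the already-proved Theorem \ref{thm:general-existsk}, which covers FO$^2$ with only the $\exists_{=k}$ quantifier. Concretely, I would push a preprocessing step that eliminates every occurrence of $\exists_{\leq k}$ and $\exists_{\geq k}$ by rewriting them as Boolean combinations of $\exists_{=j}$ quantifiers applied to the same subformula. The key observation is that, on any domain, the cardinality condition ``at most $k$'' is the disjunction of ``equal to $j$'' for $j=0,\dots,k$, and ``at least $k$'' is the negation of ``at most $k-1$''.

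First I would take the input FO$^2$ sentence $\Gamma$ (with arbitrary counting quantifiers) and, processing its parse tree bottom-up, replace every subformula of the form $\exists_{\leq k} y : \psi(x,y)$ by
$$\bigvee_{j=0}^{k} \exists_{=j} y : \psi(x,y),$$
and every subformula of the form $\exists_{\geq k} y : \psi(x,y)$ by
$$\bigwedge_{j=0}^{k-1} \neg\bigl(\exists_{=j} y : \psi(x,y)\bigr).$$
(The analogous rewrites apply when the counting quantifier binds $x$.) These equivalences hold in every interpretation on every finite domain, because both sides express the same condition on the number of witnesses of $\psi(x,\cdot)$. Crucially, the rewrites introduce neither new variables nor new predicates, and each counting quantifier $\exists_{\bowtie k}$ is replaced by at most $k+1$ copies of $\exists_{=j}$ quantifiers applied to the same $\psi$, so the size of the resulting sentence $\Gamma'$ is bounded by a constant (depending on $\Gamma$ and on the fixed numerical parameters appearing in its counting quantifiers) and in particular is independent of $|\Delta|$.

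After this rewriting, $\Gamma'$ is an FO$^2$ sentence whose only counting quantifiers are of the form $\exists_{=j}$, possibly occurring under negation and other Boolean connectives. Theorem \ref{thm:general-existsk} applies directly to such sentences and yields a domain-lifted algorithm for $\wfomc(\Gamma', w, \overline{w}, \Delta)$. Since $\Gamma$ and $\Gamma'$ have exactly the same models over every domain $\Delta$, we obtain $\wfomc(\Gamma, w, \overline{w}, \Delta) = \wfomc(\Gamma', w, \overline{w}, \Delta)$, completing the reduction.

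I do not expect a serious obstacle here: the only thing to verify carefully is that the rewrites stay within FO$^2$ (they do, because no fresh variables are needed---the variables inside $\psi$ are reused) and that the number of $\exists_{=j}$ subformulas produced is bounded independently of $|\Delta|$ (it is, because the values $k$ appearing in the original counting quantifiers are part of the input sentence, not of the domain). All the genuine work has already been done in Theorem \ref{thm:general-existsk}; the present theorem is essentially a syntactic wrapper that shows the three counting quantifiers of \citep{gradel1997two} can be uniformly handled once $\exists_{=k}$ is available.
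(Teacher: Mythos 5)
Your proposal is correct and takes essentially the same route as the paper: rewrite $\exists_{\leq k}$ as a bounded disjunction of $\exists_{=j}$ conditions and handle $\exists_{\geq k}$ by negating ``at most $k-1$'' (your conjunction of negations is just the De Morgan form of the paper's $\neg(\exists_{\leq k-1})$), thereby avoiding the domain-dependent blowup and reducing everything to Theorem \ref{thm:general-existsk}. The only cosmetic difference is that you write the $j=0$ disjunct as $\exists_{=0}\, y : \psi(x,y)$ where the paper spells it out as $\forall y : \neg\psi(x,y)$.
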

\begin{proof}
We can prove this theorem by reducing it to the case with just the $\exists_{=k}$-quantifier, whose domain-liftability is established by Theorem \ref{thm:general-existsk}. 

First, any sub-formula of the form $\exists_{\leq k} y : \psi(x,y)$ can be replaced by: 
$$(\forall y : \neg \psi(x,y)) \vee (\exists_{=1} y : \psi(x,y)) \vee (\exists_{=2} y : \psi(x,y)) \vee \dots \vee (\exists_{=k} y : \psi(x,y)).$$

\noindent Obviously, the size of the result of the above transformation is independent of the domain size and if the original formula was in FO$^2$, the new one will be in FO$^2$ as well.

Next we need to get rid of the sub-formulas of the form $\exists_{\geq k} y : \psi(x,y)$. Note that we cannot blindly apply the same method we used for the sub-formulas with the quantifier $\exists_{\leq k}$ because, in that case, the number of disjuncts in the resulting formula would grow with the size of the domain. Instead, we proceed as follows. We equivalently rewrite the sub-formula as:
$$\exists_{\geq k} y : \psi(x,y) = \neg \neg (\exists_{\geq k} y : \psi(x,y)) = \neg (\exists_{\leq k-1} y : \psi(x,y)).$$
We already know how to handle sub-formulas of this form. So we are done.
\end{proof}

\section{Related Work}\label{sec:related-work}

The work presented in this paper builds on a long stream of research in lifted inference \citep{poole2003first,braz2005lifted,broeck2011completeness,DBLP:conf/uai/GogateD11a,van2013lifted,van2014skolemization,beame2015symmetric,DBLP:conf/nips/KazemiKBP16,DBLP:conf/lics/KuusistoL18}. On the technical level, in the present paper, we directly exploit results from \citep{,van2014skolemization} that together established domain-liftability of the two-variable fragment of first-order logic, which we extended by allowing counting quantifiers. The result on domain-liftability of the two-variable fragment was relatively recently extended in two somewhat related directions that we describe below.

First, \citet{DBLP:conf/nips/KazemiKBP16} showed that so-called {\em domain-recursion rule}, which had been previously proposed by Van den Broeck, allows to enlarge the class of domain-liftable theories. In particular, they identified two new domain-liftable fragments of first-order logic which they call S$^2$FO$^2$ and S$^2$RU. These two classes contain among others certain theories with functionality axioms but, as also pointed out by \citet{DBLP:conf/lics/KuusistoL18}, not all FO$^2$ theories with functionality axioms are contained in them. The domain-liftable class identified in our work, i.e.\ FO$^2$ with counting quantifiers, and the classes studied by \citet{DBLP:conf/nips/KazemiKBP16} are incomparable. However, we believe that our techniques and theirs could be combined in future work.

More recently, \citet{DBLP:conf/lics/KuusistoL18} showed that WFOMC for FO$^2$ with at most one functionality constraint is domain-liftable using a rather complex argument. They also mentioned in the same work (although without giving any details) that if one could extend this result to multiple functionality constraints, domain-liftability of FO$^2$ with counting quantifiers would follow. Specifically, in the concluding section of their paper, they say: {\em It can be shown that WFOMC for formulae of two-variable logic with counting $\operatorname{C}^2$ can be reduced to WFOMC for $\operatorname{FO}^2$ with several functionality axioms.} Thus, in principle, our results from Section \ref{sec:exists-constraints}, where we established domain-liftability of FO$^2$ with an arbitrary number of functionality constraints, combined with their remark would also be sufficient to establish our main result. However, since there are no details and no proof, we had to provide these ourselves. In fact, we did not use reductions directly based on functionality constraints in our proofs (since that seemed to be rather wasteful), therefore we suspect that our reductions might also be more efficient. 

Finally, the methods that we used in the present paper resemble techniques used in enumerative combinatorics \citep{stanley1986enumerative}, in particular generating functions. We plan to investigate these connections more closely in future work.

\section{Conclusions}\label{sec:conclusions}

In this work we showed that the two-variable fragment of first-order logic with counting quantifiers is domain-liftable. This significantly broadens the class of weighted first-order model counting problems that can be solved in polynomial time. There is still a lot one can do from here, especially for improving the practical efficiency of lifted inference algorithms on problems that result from our reductions. 


\subsection*{Acknowledgements}

This work was supported by Czech Science Foundation project ``Generative Relational Models'' (20-19104Y), the OP VVV project {\it CZ.02.1.01/0.0/0.0/16\_019/0000765} ``Research Center for Informatics'' and a donation from X-Order Lab. The author is grateful to Guy Van den Broeck for pointing out the problem of inference in the two-variable fragment of FOL with counting quantifiers.

\appendix

\section{Computing Weighted Model-Counting Functions Using DFT}\label{sec:mcdft}

Here we describe an alternative approach to computing weighted model-counting functions based on discrete Fourier transform. This approach is a generalization of a method from our previous work \citep{kuzelka.complex} where it was used in the context of Markov logic networks.

\paragraph{Notation} We need a bit more notation here. We use $i$ to denote the imaginary unit $i^2 = -1$ and $\langle v, w \rangle$ to denote the inner product of the vectors $v$ and $w$ (when $v$ and $w$ are real vectors, inner product coincides with scalar product). 

\subsection{Discrete Fourier Transform}\label{sec:fourier}

Here we describe the basic properties of {\em multi-dimensional Fourier transform} (DFT) that will be needed in this paper. Let $d$ be a positive integer and let $\mathbf{N} = [N_1, \dots, N_d] \in (\mathbf{N} \setminus \{ 0 \})^d$ be a vector of positive integers. Let us define $\mathcal{J} = \{0,1,\dots, N_1 - 1 \} \times \{0,1,\dots, N_2 - 1 \} \times \dots \times \{0,1,\dots, N_d - 1 \}$. Let $f : \mathcal{J} \rightarrow \mathbb{C}$ be a function defined on $\mathcal{J}$. Then the DFT of $f$ is the function $g : \mathcal{J} \rightarrow \mathbb{C}$ defined as
\begin{equation}\label{eq:dft}
    g(\mathbf{k}) = \sum_{\mathbf{n} \in \mathcal{J}} f(\mathbf{n})  e^{ - i 2 \pi \langle \mathbf{k}, \mathbf{n} / \mathbf{N} \rangle }
\end{equation}
where $\mathbf{k}/\mathbf{N} \stackrel{def}{=} \left[ [\mathbf{k}]_1/N_1, [\mathbf{k}]_2/N_2, \dots, [\mathbf{k}]_d/N_d \right]$ (i.e.\ ``/'' denotes component-wise division). We use the notation $g = \mathcal{F}\left\{f\right\}$. The inverse transform is then given as
\begin{equation}\label{eq:idft}
    f(\mathbf{n}) = \frac{1}{\prod_{l=1}^d N_l} \sum_{\mathbf{k} \in \mathcal{J}} g(\mathbf{k})  e^{ i 2 \pi \langle \mathbf{n}, \mathbf{k} / \mathbf{N} \rangle}.
\end{equation}
It holds $f = \mathcal{F}^{-1} \left\{ \mathcal{F}\left\{ f \right\} \right\}$.

\subsection{Computing Weighted Model-Counting Functions Using DFT}

Let $\Omega$ be the set of all possible worlds on a given domain $\Delta$ and a given set of predicates $\mathcal{R}$. Here we show how to compute the WMC-function $\operatorname{WMC}_{\Psi,\Gamma,\Delta}(\mathbf{n})$ for given list of predicates $\Psi = (R_1,\dots,R_m)$ and a given sentence $\Gamma$ using DFT.

First, $\operatorname{WMC}_{\Psi,\Gamma,\Delta}(\mathbf{n})$ is a real-valued function of $m$-dimensional integer vectors. We can restrict the domain\footnote{Here, {\em domain} refers to the domain of a mathematical function, not to a {\em domain} as a set of domain elements $\Delta$.} of $\operatorname{WMC}_{\Psi,\Gamma,\Delta}(\mathbf{n})$ to the set $$\mathcal{D} = \{0,1,\dots, M_1 \} \times \{0,1,\dots, M_2\}\times \dots \times \{ 0, 1, \dots, M_m \}$$ 
where $$M_1 = |\Delta|^{\textit{arity}(R_1)},\; M_2 = |\Delta|^{\textit{arity}(R_2)},\; \dots, \; M_m = |\Delta|^{\textit{arity}(R_m)}.$$

Second, from the definition of DFT we then have for the Fourier transform $g(\mathbf{k})$:
\begin{equation}\label{dftq1}
    g(\mathbf{k}) = \sum_{\mathbf{n} \in \mathcal{D}} \operatorname{WMC}_{\Psi,\Gamma,\Delta}(\mathbf{n},w,\overline{w}) \cdot  e^{ - i 2 \pi \langle \mathbf{k}, \mathbf{n} / \mathbf{M} \rangle }
\end{equation}
where $\mathbf{k} = (k_1,\dots,k_m)$, $\mathbf{M} = (M_1+1,\dots,M_m+1)$ and the division in $\mathbf{n} / \mathbf{M}$ is again component-wise.

Third, let $\Omega^* = \{\omega \in \Omega | \omega \models \Gamma \}$. Let $\mathcal{R}$ be the set of all predicates $R$ that have non-neutral weights (i.e.\ $w(R) \neq 1$ or $\overline{w}(R) \neq 1$). 
Using the definition of $\operatorname{WMC}_{\Psi,\Gamma,\Delta}(\mathbf{n},w,\overline{w})$, we can write 
\begin{multline*}
    g(\mathbf{k}) = \sum_{\mathbf{n} \in \mathcal{D}} \left( \sum_{\omega \in \Omega^* : \mathbf{N}(\Psi,\omega) = \mathbf{n}}  \prod_{R \in \mathcal{R} } w(R)^{\mathbf{N}(R,\omega)} \cdot \overline{w}(R)^{|\Delta|^{\textit{arity}(R)}-\mathbf{N}(R,\omega)} \right) e^{ - i 2 \pi \langle \mathbf{k}, \mathbf{n} / \mathbf{M} \rangle } \\
    = \sum_{\mathbf{n} \in \mathcal{D}} \sum_{\omega \in \Omega^* : \mathbf{N}(\Psi,\omega) = \mathbf{n}} \left( \prod_{R \in \mathcal{R} } w(R)^{\mathbf{N}(R,\omega)} \cdot \overline{w}(R)^{|\Delta|^{\textit{arity}(R)}-\mathbf{N}(R,\omega)} \right)  e^{ - i 2 \pi \langle \mathbf{k} / \mathbf{M}, \mathbf{N}(\Psi,\omega)  \rangle } \\
    = \sum_{\omega \in \Omega^*}  \left( \prod_{R \in \mathcal{R}} w(R)^{\mathbf{N}(R,\omega)} \cdot \overline{w}(R)^{|\Delta|^{\textit{arity}(R)}-\mathbf{N}(R,\omega)} \right)  e^{ - i 2 \pi \langle \mathbf{k} / \mathbf{M}, \mathbf{N}(\Psi,\omega)  \rangle } \\
    = \wfomc(\Gamma, w', \overline{w}', \Delta),
\end{multline*}


\noindent where we set $w'(R_i) = w(R_i) \cdot e^{-2 i \pi k_i/M_i}$ for all $R_i \in \Psi$, $w'(R) = w(R)$ for all $R \in \mathcal{R} \setminus \Psi$ and $\overline{w}'(R) = 1$ for all $R \in \mathcal{R}$. Thus, we can compute the DFT of WMC-functions using a polynomial number (in $|\Delta|$) of queries to a WFOMC oracle. Once we have the DFT $g(\mathbf{k})$ of the MC-function, obtaining the MC-function from the DFT is trivial. We just compute the inverse DFT of $g(\mathbf{k})$. Importantly, to obtain this, we did not need to add explicit cardinality constraints, expressed as first-order logic sentences, to $\Gamma$ or modify the formulas in it or in the set $\Psi$ in any way. 



\paragraph{A note on representation of complex numbers} In \citep{kuzelka.complex} we discuss the issues of representing complex numbers in the computations such as DFT in detail. 

\section{Omitted Proofs}

In this section we give proofs that were omitted from the main text.

\subsection{Proof of Proposition \ref{prop:lagrange}}

First, we bound the coefficients of monomials of the polynomials $l_i(x)$. We write $l_i(x) = \sum_{j=0}^{d} \frac{e_{i,j}}{f_{i,j}} \cdot x^j$ where $e_{i,j}, f_{i,j} \in \mathbb{N}$. We have
\begin{equation}\label{eq:maxe}
    \max_{i} \log{|e_{i,j}|} \leq \max_i \log{\left( 2^d \prod_{\scriptsize{\begin{array}{c} 0 \leq j \leq d \\ i \neq j \end{array}}}^{d} x_j \right)} \leq \log{\left( 2^d \max_j |x_j|^d \right)} = d \log\left(2 \max_j |x_j|\right),
\end{equation}
\begin{multline}\label{eq:maxf}
    \max_i \log{|f_{i,j}|} \leq \max_i \log{\left( \prod_{\scriptsize{\begin{array}{c} 0 \leq j \leq d \\ i \neq j \end{array}}}^{d} |x_i - x_j| \right)} \leq \max_i \log{\left( \max_j |x_i - x_j|^d \right)} \\ 
    = d \log{\left( 2 \max_j |x_j| \right)}.
\end{multline}
    
\noindent Let $y_i = \frac{y_i'}{y_i''}$ where both $y_i'$ and $y_i''$ are integers. We then also have for the coefficient $a_j$ of the monomial $x^j$ in the interpolating polynomial: 
$$a_j = \sum_{i = 0}^d \frac{y_i'}{y_i''} \frac{e_{i,j}}{f_{i,j}} = \frac{\sum_{i = 0}^d y_i' e_{i,j} \prod_{0 \leq k \leq d, k \neq j} y_k'' f_{k,j}}{\prod_{i=0}^d y_i'' f_{i,j}}.$$
Both the numerator and denominator of the last expression are integers and it holds
\begin{multline*}
    \log \left( \left| \sum_{i = 0}^d y_i' e_{i,j} \cdot \prod_{0 \leq k \leq d, k \neq j} y_k'' f_{k,j} \right|\right) \leq \log \left( \left|d \max_{i} \left\{ y_i' e_{i,j} \right\} \cdot \prod_{0 \leq k \leq d, k \neq j} y_k'' f_{k,j} \right|\right) \\
    \leq \log d + \log \max_{i}|y_i'| + \log\max_{i} |e_{i,j}| + \sum_{0 \leq k \leq d, k \neq j} \left(\log |y_k''|  + \log |f_{k,j}| \right)\\
    \leq \log d + \log \max_{i}|y_i'| + d \log \max_i|y_i''| + \left(d+d^2\right) \log\left(2 \max_j |x_j|\right)
\end{multline*}
(here the last inequality follows from (\ref{eq:maxe}) and (\ref{eq:maxf})). It also holds 
\begin{equation*}
    \log \left( \left| \prod_{i=0}^d y_i'' f_{i,j} \right| \right) \leq d \log \max_i|y_i''| + d \max_{i} \log\left(|f_{i,j}| \right)
    \leq d \log \max_i|y_i''| + d^2 \log\left(2 \max_j |x_j|\right).
\end{equation*}
It follows that the number of bits needed to represent the coefficients of the interpolating polynomial grows only polynomially with the number of bits needed to encode the points $(x_i,y_i)$, which is what we needed to show.
\qed

\subsection{Proof of Proposition \ref{prop:wfomcbits}}

Let use define the following set of integer vectors 
$$\mathcal{D} = \{0,1,\dots, M_1 \} \times \{0,1,\dots, M_2\}\times \dots \times \{ 0, 1, \dots, M_m \}$$ 
where $M_1 = |\Delta|^{\textit{arity}(R_1)},\; M_2 = |\Delta|^{\textit{arity}(R_2)},\; \dots, \; M_m = |\Delta|^{\textit{arity}(R_m)}.$ 
It is obvious that the weight of any possible world $\omega \in \Omega$ can be only one of the form $\prod_{i = 1}^m w(R_i)^{n_i} \cdot \overline{w}(R_i)^{|\Delta|^{\textit{arity}(R_i)}-n_i}$ for some $(n_1,n_2,\dots,n_m) \in \mathcal{D}$. That means that there are only polynomially many, in $\Delta$, different weights of possible worlds and the WFOMC is their weighted sum. Specifically, the WFOMC can be written as

\begin{equation}\label{eq:wfomcassum}
\wfomc(\Gamma,w,\overline{w},\Delta) = \sum_{(n_1,\dots,n_m) \in \mathcal{D}} C_{(n_1,\dots,n_m)} \prod_{i = 1}^m \left( \frac{w'(R_i)}{w''(R_i)} \right)^{n_i} \cdot \left(\frac{\overline{w}'(R_i)}{\overline{w}''(R_i)} \right)^{|\Delta|^{\textit{arity}(R_i)}-n_i}     
\end{equation}

\noindent where $C_{(n_1,\dots,n_m)}\in \mathbb{N}$. It is easy to see that $C_{(n_1,\dots,n_m)} \leq 2^{m \cdot |\Delta|^A}$ and $n_i \leq |\Delta|^A$ where $A = \max_{R \in \mathcal{R}} \textit{arity}(R)$. Next we define 
\begin{align*}
    & D'_{(n_1,\dots,n_m)} = C_{(n_1,\dots,n_m)} \prod_{i = 1}^m  w'(R_i)^{n_i} \cdot \overline{w}'(R_i)^{|\Delta|^{\textit{arity}(R_i)}-n_i}\\
    & D''_{(n_1,\dots,n_m)} = \prod_{i = 1}^m w''(R_i)^{n_i} \cdot \overline{w}''(R_i)^{|\Delta|^{\textit{arity}(R_i)}-n_i}
\end{align*}
\noindent We have 
\begin{multline*}
    \log D'_{(n_1,\dots,n_m)} = \log C_{(n_1,\dots,n_m)} + \sum_{i = 1}^m n_i \log w'(R_i) + \sum_{i = 1}^m \left( |\Delta|^{\textit{arity}(R_i)}-n_i \right)  \log \overline{w}'(R_i) \\
    \leq m A \log 2 + 2 m |\Delta|^A \log M
\end{multline*}
and similarly also
\begin{equation*}
    \log D''_{(n_1,\dots,n_m)} = \sum_{i = 1}^m n_i \log w''(R_i) + \sum_{i = 1}^m \left( |\Delta|^{\textit{arity}(R_i)}-n_i \right)  \log \overline{w}''(R_i) 
    \leq 2 m |\Delta|^A \log M.
\end{equation*}
It follows that each of the summands in (\ref{eq:wfomcassum}) can be represented as a fraction where both the numerator and the denominator are represented using a polynomial number of bits in $|\Delta|$ and in $\log M$.

Finally, $\wfomc(\Gamma,w,\overline{w},\Delta)$ is a sum of $|\mathcal{D}|$ such fractions and $|\mathcal{D}|$ is also polynomial in $|\Delta|$. The statement of the proposition follows from this. 
\qed

\section{Additional Examples}\label{appendix:additional-examples}

\subsection{Counting Anti-Involutive Functions}\label{appendix:anti-involutive}

Here we provide another example in which we illustrate how the techniques presented in Section \ref{sec:exists-constraints}, where we introduced $\forall\exists_{=1}$-constraints, can be used to efficiently encode more complex examples without much additional complexity Although the techniques that we developed in the later sections (Section \ref{sec:exists-k-general-case} and Section \ref{sec:main-result}) are more general but we may often pay for this generality by computational speed. For that we may sometimes need to slightly just these techniques. We illustrate it on the case of anti-involutive functions.

We look at functions from $M = \{1,2,\dots,m\}$ to $N = \{1,2,\dots,n\}$. 
We say that a function $f : M \rightarrow N$ is {\em anti-involutive} if $f(f(x)) \neq x$ for all $x \in M$. We are interested in the problem of counting all anti-involutive functions from $M$ to $N$. For that we first define such functions in first-order logic with counting quantifiers and cardinality constraints (which could also be represented in this case using counting quantifiers):

\begin{align}
    & |M| = m, \label{ex:involutive:1} \\
    & \forall x : M(x) \Rightarrow (\exists_{=1} y : f(x,y)), \label{ex:involutive:2}\\
    & \forall x,y : \neg M(x) \Rightarrow \neg f(x,y), \label{ex:involutive:3} \\
    & \forall x,y : \neg f(x,y) \vee \neg f(y,x). \label{ex:involutive:4}
\end{align}

\noindent We also assume that $\Delta = N$. The models of this theory on the domain $\Delta$ must be anti-involutive functions from a set of size $m$ (rather than from the set $M$) to $\Delta$. This means that to obtain the number of anti-involutive functions from $M$ to $N$, we will need to divide the model count that we obtain by $\binom{n}{m}$.

First, we replace both (\ref{ex:involutive:2}) and (\ref{ex:involutive:3}) by:

\begin{align}
    & |f| = m,\label{ex:involutive:5}\\
    & \forall x \exists y : M(x) \Rightarrow f(x,y).\label{ex:involutive:6}
\end{align}

The correctness of this transformation follows from similar reasoning as we used in the proof of Lemma \ref{lemma:lemma1}. Now we have no more counting quantifiers, only first-order logic sentences and cardinality constraints. Hence, all we need is to compute the MC-function $\operatorname{MC}_{\Psi, \Gamma, \Delta}(\mathbf{n})$, where $\Psi = (f, M)$ and $\Gamma$ is a conjunction of (\ref{ex:involutive:4}) and (\ref{ex:involutive:6}), and then use it to count only over the possible worlds that satisfy the cardinality constraints $|f| = |M| = m$.

\begin{figure}[t]
\centering
\includegraphics[width=0.49\linewidth]{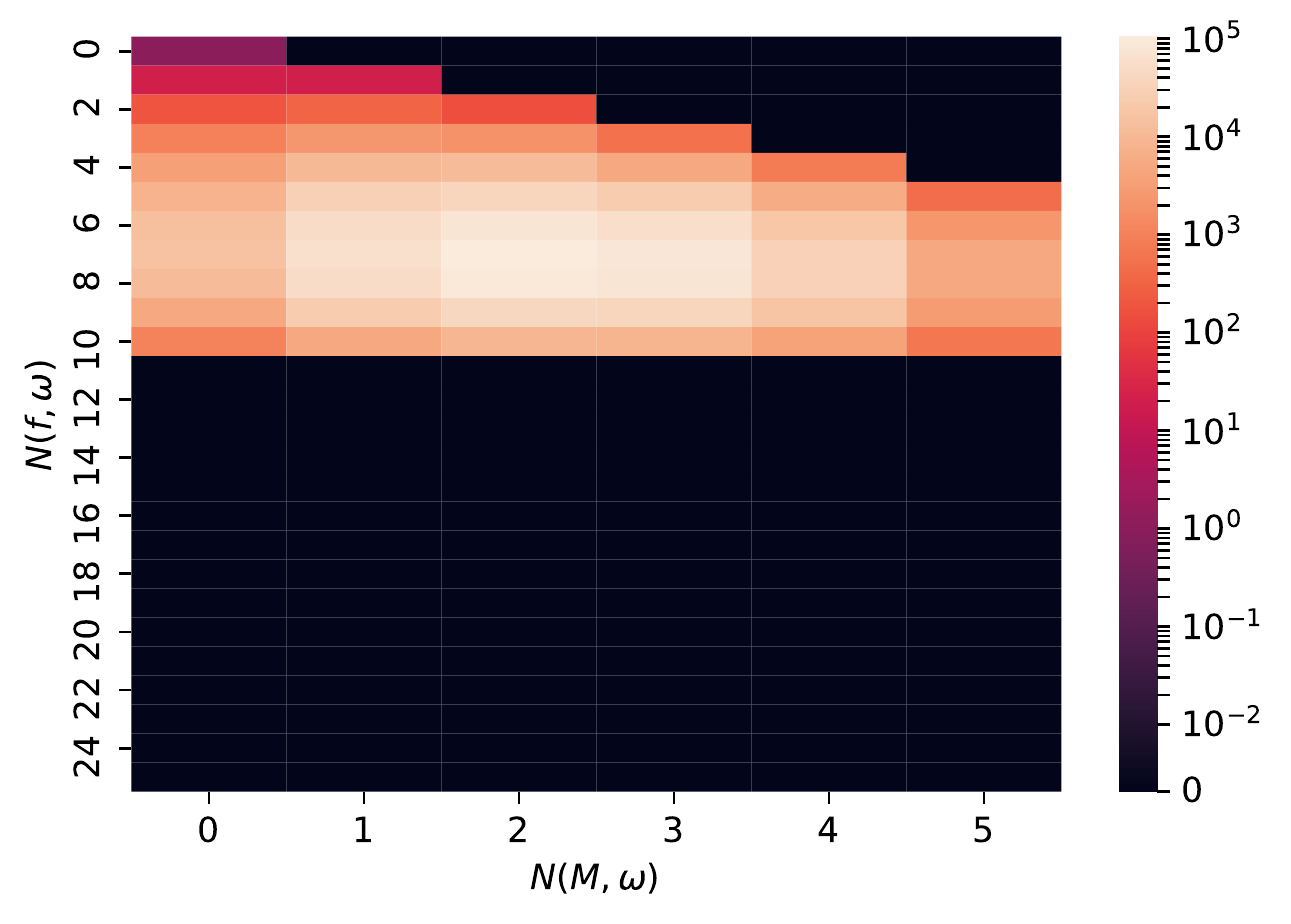}
\includegraphics[width=0.49\linewidth]{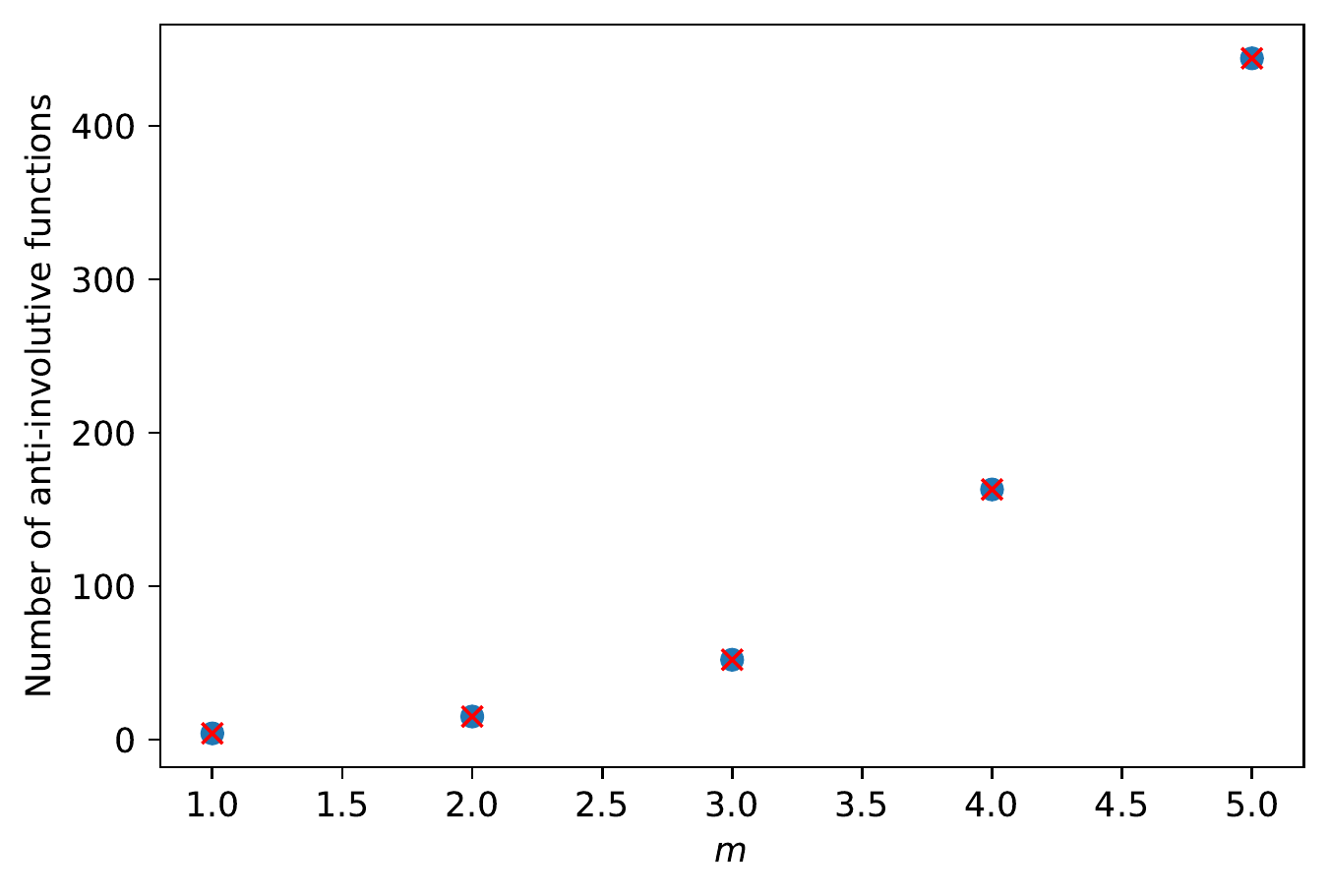}
\caption{{\bf Left:} The MC-function for computing the number of anti-involutive functions when $n = 5$. {\bf Right: } The number of anti-involutive functions from $\{1,2,\dots,m \}$ to $\{1,2,\dots,5 \}$.}\label{figure:anti-involutive-5}
\end{figure}

\begin{figure}[t]
\centering
\includegraphics[width=0.49\linewidth]{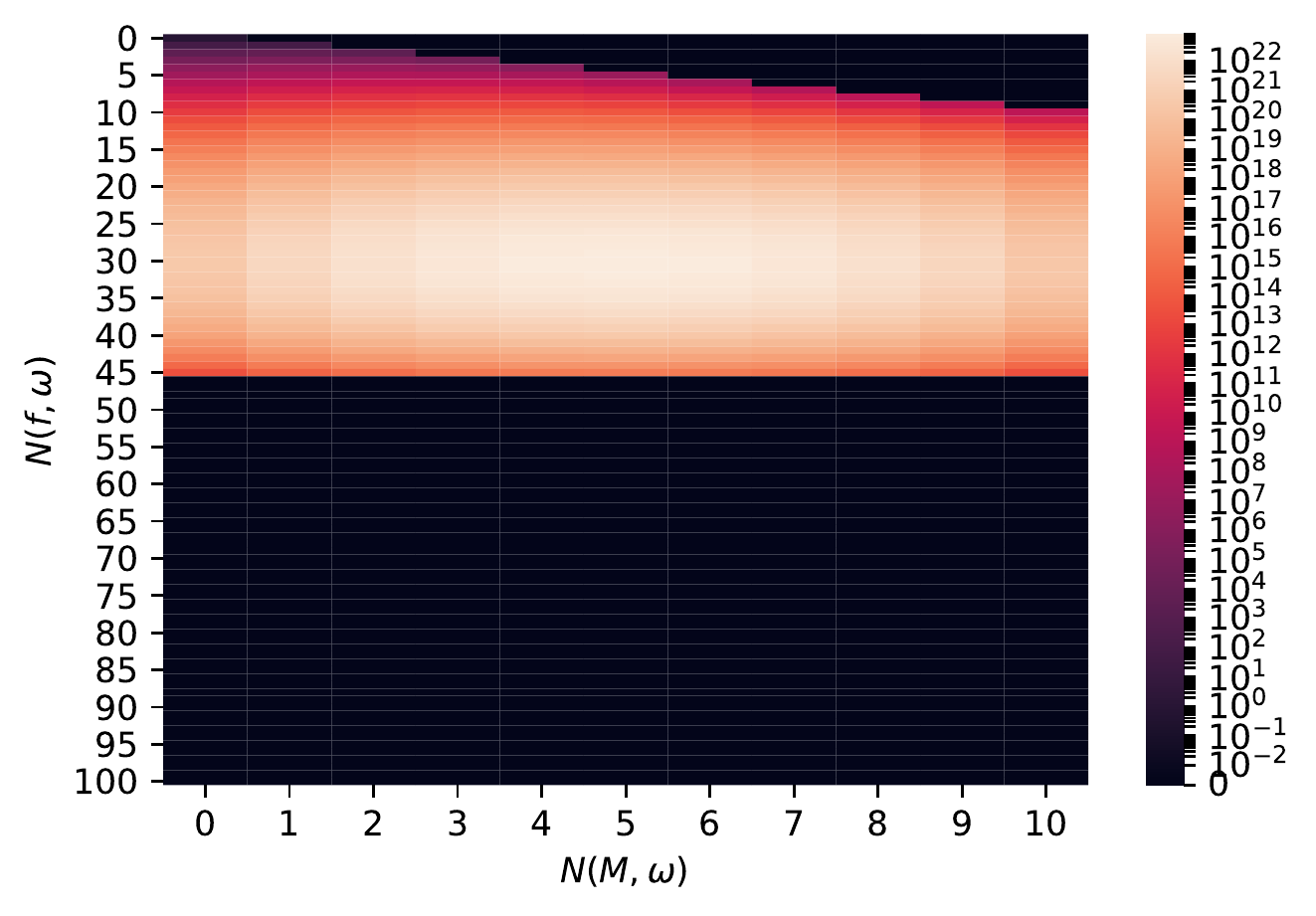}
\includegraphics[width=0.49\linewidth]{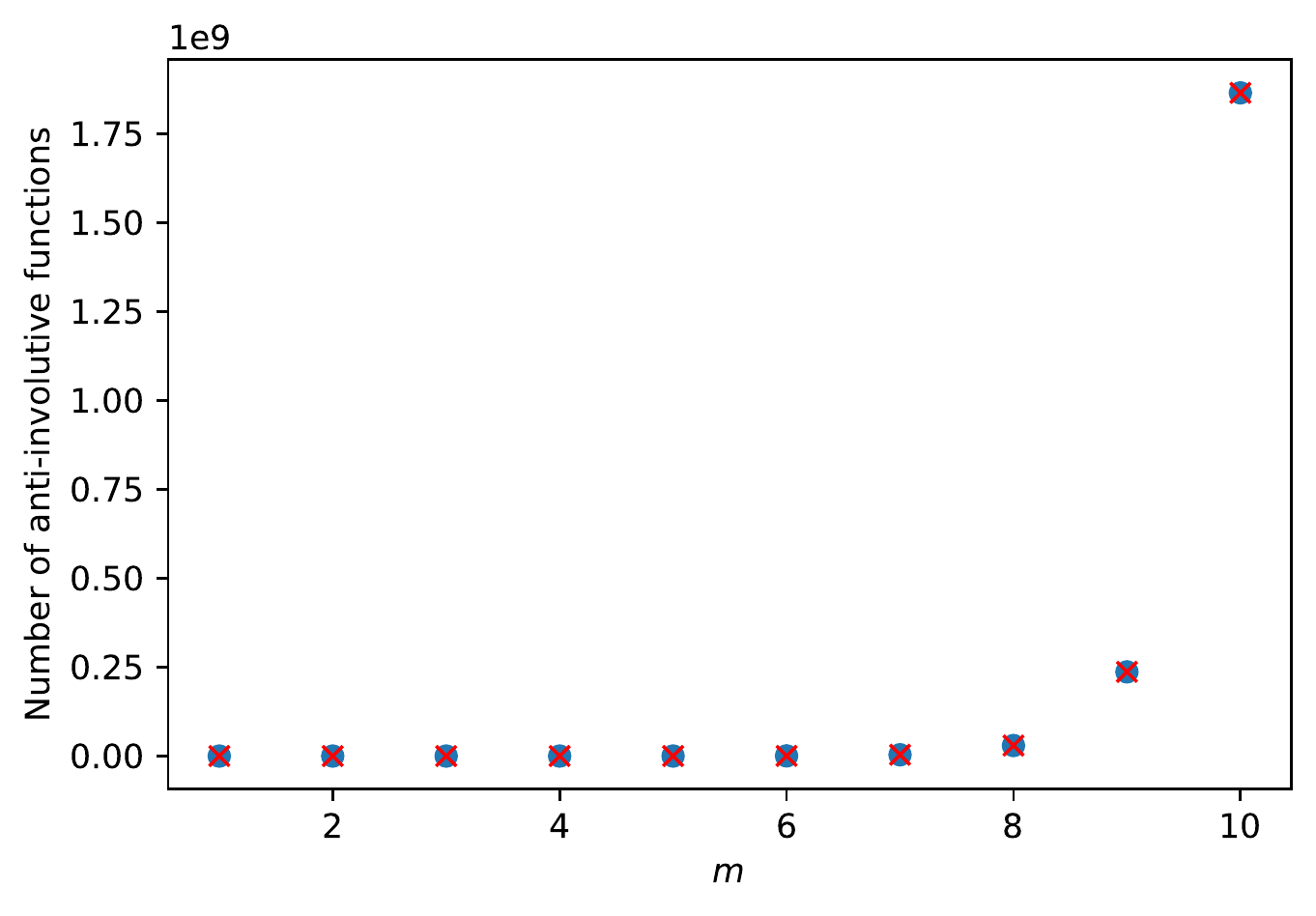}
\caption{{\bf Left:} The MC-function for computing the number of anti-involutive functions when $n = 10$. {\bf Right: } The number of anti-involutive functions from $\{1,2,\dots,m \}$ to $\{1,2,\dots,10 \}$.}\label{figure:anti-involutive-10}
\end{figure}

We plotted the resulting MC-functions in the left panels of Figure \ref{figure:anti-involutive-5} and Figure \ref{figure:anti-involutive-10}, for $n = 5$ and $n = 10$, respectively.\footnote{It is interesting to note that there is a good reason why the MC-function is zero in the roughly bottom half of the plots. Taking a closer look at the MC-function we can notice that it is zero for $|f| > \binom{n}{2}$. This is because of the constraint $\forall x \forall y : \neg f(x,y) \vee \neg f(y,x)$. Which implies that the cardinality of the relation $f$ cannot exceed the number of edges of the complete undirected graph on $n$ vertices.} In the right panels of these two figures, we plotted the numbers of anti-involutive functions computed by our approach (blue circles). Note that each of these plots corresponds to the diagonal of the respective MC-function (i.e.\ $\operatorname{MC}_{\Psi,\Gamma,\Delta}(m,m)$) divided by $\binom{n}{m}$.

As a sanity check we compared our results with the numbers given by the explicit formula
$$F(m,n) = \sum_{i=0}^{\lfloor m/2 \rfloor} (-1)^i (n-1)^{m-2i} \binom{m}{2i} \frac{(2i)!}{2^i (i!)},$$
derived by \citet{DBLP:conf/lics/KuusistoL18}. We plotted it as red crosses. As expected, both methods give the same results.

Alternatively, instead of replacing (\ref{ex:involutive:2}) and (\ref{ex:involutive:3}) by (\ref{ex:involutive:5}) and (\ref{ex:involutive:6}), we could have used the transformation from Lemma \ref{lemma:lemma4}. However, that would actually lead to a more complex encoding. So, even though, we would still be able to solve the counting problem in time polynomial in the size of the domain (i.e.\ in $n$), the exponent of the polynomial might be higher. This illustrates the fact that there may often be more efficient transformations than those we used in our proofs. Arguably, there seems to be quite some potential in investigating more efficient transformations for certain cases.



\bibliographystyle{named}
\bibliography{kr}

\end{document}